\documentclass{article} %
\usepackage[preprint]{goodfire}

\usepackage{microtype}
\usepackage{hyperref}
\usepackage{url}
\usepackage{booktabs}
\usepackage{amsmath}
\usepackage{macros}
\usepackage{lineno}
\usepackage[color=pink!20,textsize=scriptsize]{todonotes}

\newcommand{\blocksize}{b}
\newcommand{\topk}{k}

\hypersetup{colorlinks=true, citecolor=primary, linkcolor=primary, urlcolor=primary}

\providecommand{\aut}[1]{\textbf{#1}}
\providecommand{\af}[1]{{\small #1}}

\providecommand{\afn}[1]{\textcolor{antique}{$^{#1}$}}

\newcommand{\costar}{{\color{burgundy}\boldsymbol{\star}}}
\newcommand{\colead}{{\color{burgundy}\boldsymbol{\dagger}}}

\newcommand{\goodfireaff}{%
  \includegraphics[height=18pt]{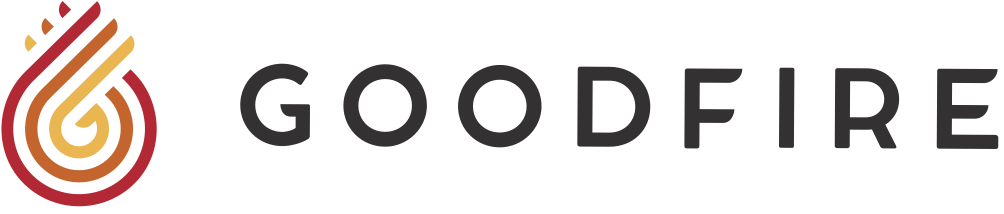} 
}

\newcommand{\authstrut}{\rule[-0.6ex]{0pt}{3.2ex}}
\newcommand{\authorentry}[2]{\authstrut\aut{#1}\afn{#2}}
\newcommand{\afflabel}[2]{\afn{#1}\af{#2}}
\newcommand{\authorsep}{\quad}
\newcommand{\repolink}[1]{%
    {\small
      \href{#1}{\raisebox{-2.8pt}{\includegraphics[height=10pt]{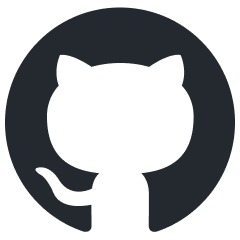}}%
    \textcolor{secondary}{~\nolinkurl{#1}}}
    }%
  }

\newtoggle{goodfireonly}
\togglefalse{goodfireonly}  %
\newcommand{\extline}[1]{\iftoggle{goodfireonly}{}{#1}}
\newcommand{\extaff}[1]{\iftoggle{goodfireonly}{}{#1}}

\newcommand{\paperauthors}{%
\authorentry{Thomas Fel}{\costar,\bullet} \authorsep
\authorentry{Matthew Kowal}{\costar,\bullet} \authorsep
\authorentry{Mozes Jacobs}{\costar,\bullet,\extaff{a}} \authorsep
\authorentry{Dron Hazra}{\costar,\bullet} \authorsep
\authorentry{Usha Bhalla}{\costar,\bullet} \\
\vspace{2pt}
\authorentry{Lee Sharkey}{\bullet} \authorsep
\authorentry{Lucius Bushnaq}{\bullet} \authorsep
\authorentry{Satchel Grant}{\bullet} \authorsep
\authorentry{Tal Haklay}{\bullet}
\\
\authorentry{Thomas Icard}{\bullet,\extaff{b}} \authorsep
\authorentry{Can Rager}{\bullet} \authorsep
\authorentry{Michael Pearce}{\bullet} \authorsep
\authorentry{Daniel Wurgaft}{\bullet,\extaff{b}} \authorsep
\\
\authorentry{Aiden Swann}{\bullet,\extaff{b}} 
\authorsep
\authorentry{Fenil Doshi}{\bullet,\extaff{a}} \authorsep
\authorentry{Siddharth Boppana}{\bullet} 
\authorsep
\authorentry{Curt Tigges}{\bullet}
\\
\vspace{2pt}
\authorentry{Nick Cammarata}{\bullet} \authorsep
\authorentry{Thomas Serre}{\extaff{c}} \authorsep
\authorentry{Vasudev Shyam}{\bullet} \authorsep
\authorentry{Owen Lewis}{\bullet}
\\
\vspace{2pt}
\authorentry{Thomas McGrath}{\bullet} \authorsep
\authorentry{Jack Merullo}{\colead,\bullet} \authorsep
\authorentry{Ekdeep Singh Lubana}{\colead,\bullet} \authorsep
\authorentry{Atticus Geiger}{\colead,\bullet}
\vspace{4pt}\\
$^\costar$\af{Equal contribution} \authorsep
\vspace{2mm}
$^\colead$\af{Equal senior contribution} \\
\goodfireaff \\
\vspace{3mm}
\repolink{https://github.com/goodfire-ai/block-sparse-featurizer}
\\
\vspace{1mm}
\extline{%
  \afflabel{\bullet}{Goodfire}\authorsep
  \afflabel{a}{Harvard University}\authorsep
  \afflabel{b}{Stanford University}\authorsep
  \afflabel{c}{Brown University}\authorsep
}
\vspace{-4mm}
}

\author{\paperauthors}

\title{Structuring Sparsity: Block-Sparse Featurizers Capture Visual Concept Manifolds}

\begin{document}
\vspace*{-22.5mm}

\maketitle

\vspace{-6mm}
\begin{figure}[H]
  \vspace{-3mm}
  \centering
  \hspace*{-0.0\linewidth}
  \includegraphics[width=0.99\linewidth]{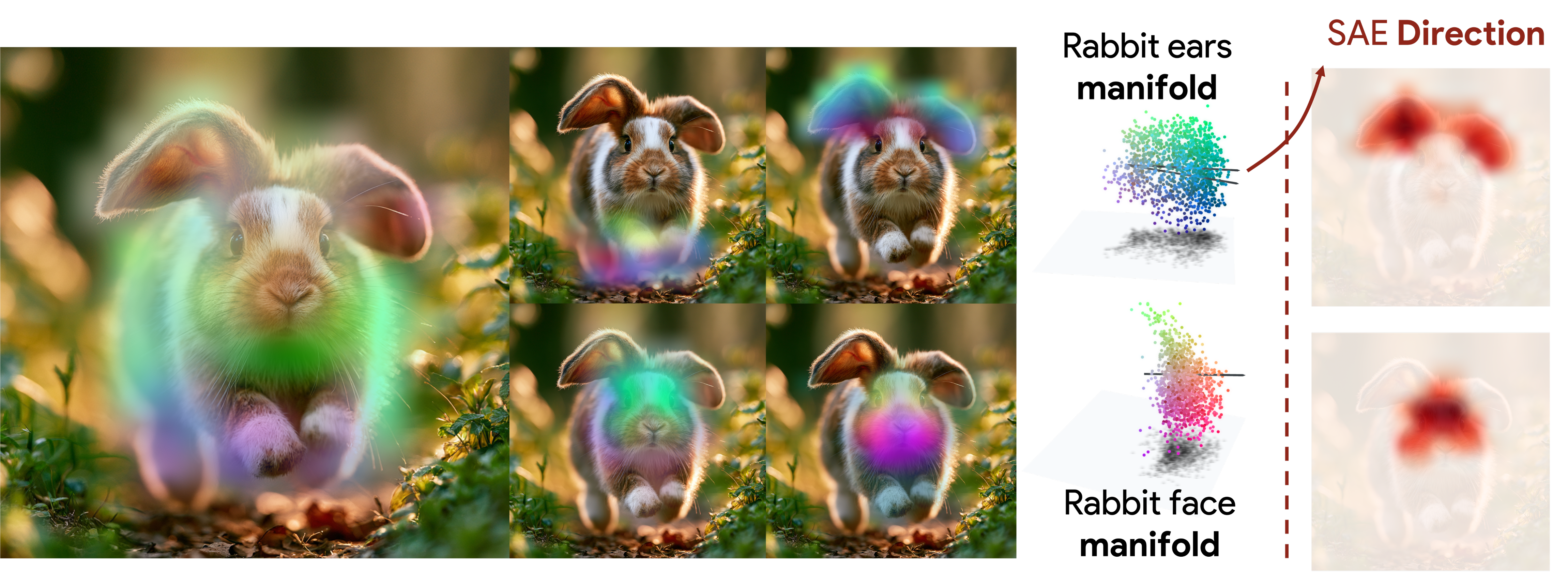}
  \vspace{-4.5mm}
\caption{\textbf{Block-sparse featurizers (BSF) capture the internal geometry of concepts.} Sparse autoencoders (SAEs) are a popular method for decomposing neural representations using isolated directions as primitive atoms. 
On the right, we show how two SAE features activate over the image of a rabbit where darker red means a higher activation; observe that one feature picks out the rabbit's ears and the other its face.
Yet, recent work points to more structure in these models than a single direction allows.
BSFs lift the primitive from directions to regions of activation space, which lets us recover the ear and face concepts as ``manifolds'' whose internal coordinates map out a conceptual space.
On the left, we show how blocks of directions activate over the image where different colors correspond to different locations within a block.
The exposed geometry is richer than an isolated direction would admit; for example, the face concept resolves into fine-grained facial features where the nose and the eyes are represented as different regions.
}
  \label{fig:intro}
  \vspace{-4.0mm}
\end{figure}

\enlargethispage{3\baselineskip}
\begin{abstract} 
What is the geometry of a visual percept? 
The most widely used protocols for decomposing neural network representations into interpretable parts treat concepts as isolated directions,
yet recent work shows that concepts are often realized as geometric structures in low dimensional regions of activation space. We turn to the literature of \textit{Structured sparsity} to close this gap, and show that block sparsity, which groups directions into blocks, is the prior matched to a generative model in which a representation is a sparse sum of low-dimensional manifolds---the modern, learned form of a classical idea in visual neuroscience, where a visual feature is carried by a coordinated group of neurons rather than a single tuned one. We implement three variants of block-sparse featurizers (BSFs) and, through a minimum-description-length analysis, show that all three describe activations more compactly than direction-based featurizers, with the recovered concepts typically two- to four-dimensional. 
We then use BSFs to \i{i} recontextualize prior work, showing that curve detectors in InceptionV1 actually read from a single continuous curve manifold, \i{ii} discover novel manifolds including shadows and lighting in DINOv3, and \i{iii} support interpretable control of image generation in diffusion models (SDXL) via manifold steering.
\end{abstract}

\clearpage

\section{Introduction} 

Shown nothing but natural images and told only to be sparse, the coding algorithm of~\cite{olshausen1996emergence} recovered the oriented, localized receptive fields of the primary visual cortex. 
Two consequences ensued at once: it lent substance to \citeauthor{barlow1961possible}'s hypothesis that the human brain is shaped to code perceptual data efficiently, and it in turn gave rise to sparse dictionary learning~\citep{tovsic2011dictionary,rubinstein2010dictionaries,elad2010sparse,mairal2014sparse,dumitrescu2018dictionary}.

Decades later, AI interpretability \citep{sharkey2025open} has adopted this program to decompose neural representations into sparse combinations of atomic units. Indeed, sparse autoencoders (SAEs;~\citealt{cunningham2023sparse, bricken2023monosemanticity, gao2024scaling,bussmann2024batchtopk,rajamanoharan2024jumping,klindt2023identifying,klindt2025superposition,costa2025flat,thasarathan2025universal})---the most widely used \textit{featurizer} for decomposing artificial neural activations---are an instantiation of sparse dictionary learning (cf.  \citealt{ghorbani2017interpretation,hindupur2025projecting,fel2023craft,fel2023holistic,zhang2021invertible,vielhaben2023multi,parekh2024concept,kowal2024visual,kowal2024understanding}).

However, SAEs assume that the proper atoms are isolated directions, while recent work demonstrates concepts can be represented as dense features~\citep{sun2025dense,fel2025into,lubana2025priors}, convex regions~\citep{fel2025into,tvetkova2025convex,park2024geometry,park2026information} and manifolds~\citep{chung2018classification,chung2021neural, modell2025,kantamneni2025language,engels2024not,gurnee2025when,yocum2025neural,karkada2026symmetry,feucht2026arithmetic,wurgaft2026manifold,bhalla2026sparse,sarfati2026shape}.\footnote{Neuroscience has moved along the same axis in two settings at once. In systems neuroscience, population geometry and neural manifolds~\citep{churchland2012neural,gallego2017neural,chung2018classification,chung2021neural} together with the mixed selectivity of single neurons~\citep{rigotti2013importance,ebitz2021population} carried the unit of representation away from the lone tuned cell toward something distributed and multi-dimensional; in the modeling of vision, independent subspace analysis learned from natural-image statistics features that are subspaces rather than directions and recovered the phase invariance of complex cells~\citep{hyvarinen2000emergence,hyvarinen2001topographic} in the lineage of the energy model~\citep{adelson1985spatiotemporal,karklin2009emergence}.}
This mismatch between the elicited phenomenology and the current apparatus is our starting point. 

To close the gap, we ought to return to what a featurizer is: a hypothesis about the data-generating process of the activations~\citep{hindupur2025projecting}. 
Read this way, the mismatch is resolved by matching the featurizer to the process the evidence supports,
and the emerging picture is that these structures share a property SAEs currently ignore: a concept carries an internal geometry.
The atoms of the featurizer must then be lifted accordingly, from individual directions to blocks of them. This idea is well established under the name of block sparsity---we also use the term group sparsity---a central instance of the broader \textit{Structured sparsity} literature\footnote{We use \emph{Structured sparsity} in its classical sense: sparsity with a prior over admissible supports. Rather than selecting isolated coordinates, structured sparse models select groups, blocks, trees, overlapping families, or other organized configurations. Specific instances include the group lasso \citep{yuan2006model}, block-sparse recovery \citep{eldar2009block}, and hierarchical or overlapping-group sparse coding \citep{jenatton2010structured,mairal2014sparse,bach2012structured}. A contribution of this paper is to explicitly highlight the fundamental interpretability questions that the structured sparsity literature deeply engages, e.g., \textit{What should be unit of analysis for decomposing a representation?}, \textit{What is the admissible geometry of such a unit?}, or \textit{How can the sparsity prior be adapted to different types of representations?}}~\citep{yuan2006model,eldar2009block,jenatton2010structured,bach2012structured}.

Work in language model interpretability has begun to explore the same relaxation, proposing featurizers that move beyond the one-direction-per-concept assumption~\citep{francel2026smixae,dalili2026subspace,hindupur2025projecting,shafran2026directions}.
The crucial point for our purposes is that these methods instantiate, in modern interpretability language, a much older organizing idea: sparsity need not be placed on individual coordinates, but can instead be imposed over structured supports.
This paper makes that connection explicit and argues for the principle of block sparsity as an inductive bias for decomposing neural representations by:
\i{i} deriving the generative motivation, \i{ii} building practical featurizers around it, and \i{iii} analyzing the structures they recover in neural representations.

Our contributions are as follows:

\begin{itemize}[nosep,leftmargin=*]

\item \textbf{Representation geometry through the lens of structured sparsity.} 
We connect the claims that concepts are represented by subspaces and manifolds in neural representations to the structured sparsity literature. We show that block sparsity is the inductive bias matched to a generative process in which each representation is a sparse sum of low-dimensional manifolds.

\item \textbf{Three block-sparse featurizers (BSFs).} 
We introduce three BSFs that instantiate this principle: (1) the \textit{Vanilla BSF} learns groups of directions where only the top $k$ blocks with the largest norms are used in reconstruction; (2) the \textit{Grassmannian BSF} where the encoder is the transpose of the decoder and enforces that directions within a block are orthogonal; and (3) the \textit{Group Lasso featurizer} with linear encoder soft-threshold activation function and with a group lasso penalty.

\item \textbf{Quantifying concept dimensionality.} We train our three BSF variants on DINOv3 and perform a minimum description length analysis to compare BSF variants with each other and SAEs. We find that grouping directions into blocks yields more efficient descriptions of activations. A stable-rank study lets us quantify the intrinsic dimensionality of individual features, revealing a spectrum of dimensions of features.%

\item \textbf{Discovering concept manifolds in vision models.} We use BSFs to discover previously unknown concept manifolds in vision models.  
In InceptionV1, we show that the famous curve-detector neurons \citep{cammarata2020curve, gorton2024missing} all read off of a previously undiscovered curve manifold and expose additional Fourier modes. %
In DINOv3, we uncover manifolds that correspond to concepts that abstract from the details of objects in a scene, e.g., their lighting and shadows. 
In SDXL, we identify a variety of  manifolds along which generation can be steered.
\end{itemize}

\section{Structured Sparse Model of Feature Geometry}

\begin{wrapfigure}{r}{0.38\textwidth}
    \centering
    \includegraphics[width=0.97\linewidth]{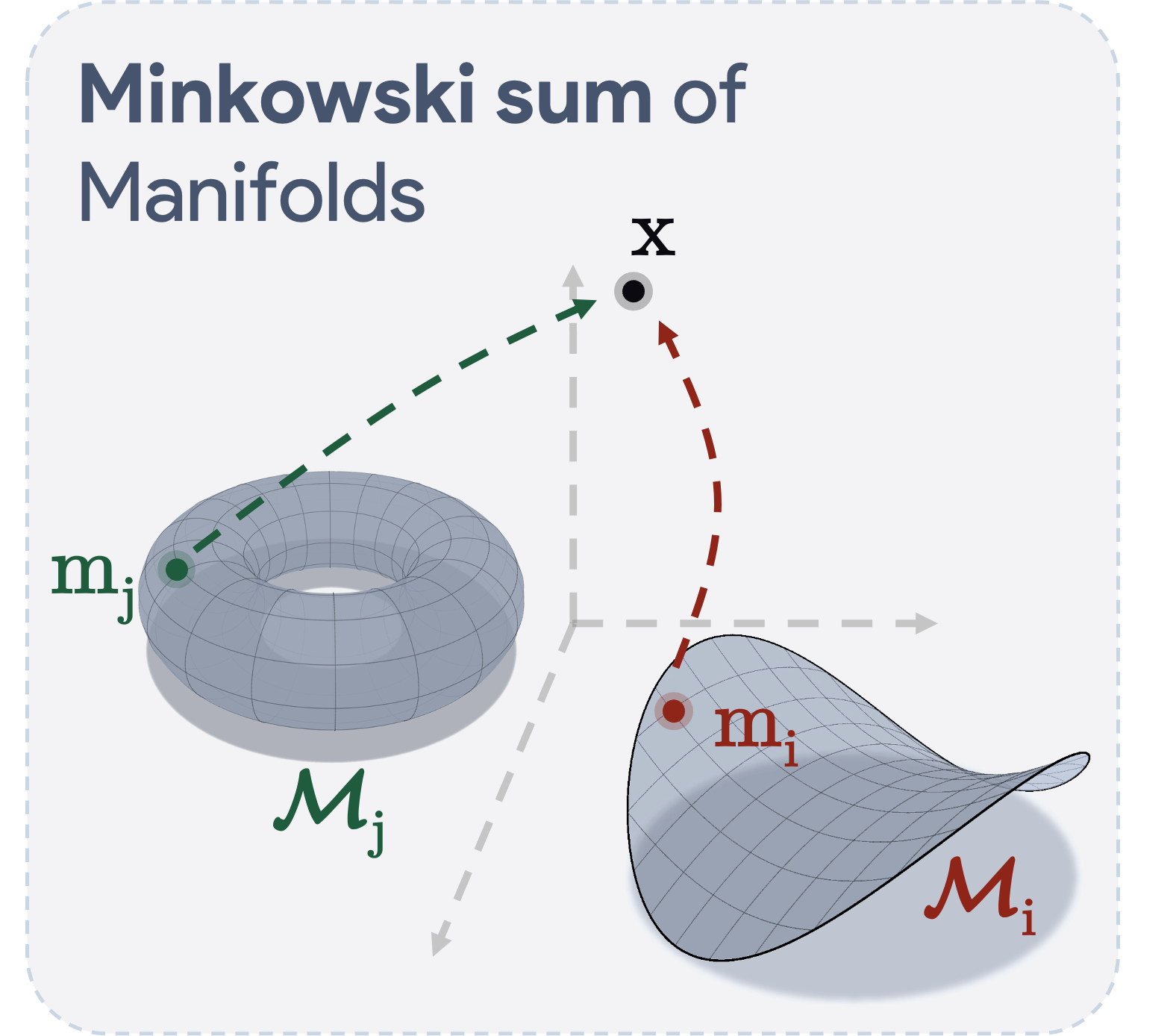}
    \caption{Activation $\bm{x}$ as a sparse sum of points drawn from a few low-dimensional regions $\mathcal{M}_i$, $\bm{x}$ lives in a Minkowski sum of manifolds (Def.~\ref{def:amm}).}
    \label{fig:dgp}
\end{wrapfigure}

There is a deep duality between the architecture of a featurizer and the assumed geometry of neural representations \citep{hindupur2025projecting}. To design a new featurizer, we first hypothesize a data-generating process (DGP) based on recent work on representation manifolds~\citep{fel2025into, lubana2025priors, bhalla2026sparse, modell2025, gurnee2025when}. We will then show that the natural featurizer architecture for this DGP is a block-sparse method. 

\paragraph{General notation.}
We work in the standard representation-learning setting. For $n \in \mathbb{N}$, we write $[n] = \{1,\dots,n\}$. A neural network model maps an input to an activation vector $\bm{x} \in \mathbb{R}^d$, and we study the set of activations $\mathcal{A} \subset \mathbb{R}^d$ it induces over a data distribution.
For a code (e.g., SAEs coefficients) $\bm{z} \in \mathbb{R}^{\blocksize G}$ partitioned into $G$ blocks $\bm{z} = (\bm{z}_1,\dots,\bm{z}_G)$ with $\bm{z}_g \in \mathbb{R}^{\blocksize}$, we write $\operatorname{supp}(\bm{z}) = \{i : z_i \neq 0\}$ and $\|\bm{z}\|_0 = |\operatorname{supp}(\bm{z})|$ for usual support and $\operatorname{supp}_{\mathcal{G}}(\bm{z}) = \{g : \bm{z}_g \neq \bm{0}\}$ for the block support (the number of active blocks).

\subsection{Data-Generating Process} 
Following~\citet{bhalla2026sparse}, we adopt the following model of representations, in which an activation is composed from a sparse set of manifolds immersed in low-dimensional regions.

\begin{definition}[Additive Mixture of Manifolds~\citep{bhalla2026sparse}]
\label{def:amm}
Let $\mathcal{N}_1, \dots, \mathcal{N}_M$ be abstract manifolds with $\dim \mathcal{N}_i \ll d$, immersed in activation space through $\bm{\gamma}_i : \mathcal{N}_i \to \mathbb{R}^d$ with images $\mathcal{M}_i = \bm{\gamma}_i(\mathcal{N}_i) \subset \mathbb{R}^d$. An activation $\bm{x} \in \mathcal{A}$ follows the additive mixture of manifolds model if
\begin{equation}
\label{eq:dgp}
\bm{x} = \sum_{i \in S} \bm{m}_i, \qquad \bm{m}_i \in \mathcal{M}_i, \qquad S \subseteq [M], \quad |S| \ll M,
\end{equation}
where $S$ indexes the factors active in $\bm{x}$. Equivalently, $\bm{x}$ lies in a Minkowski sum of $M$ manifolds.
\end{definition}

Importantly, we treat Def.~\ref{def:amm} as a hypothesized model of representations rather than an assertion that neural activations exactly satisfy it; its role is to specify the unit of composition.
Under this manifold model, each active factor contributes a point on a low-dimensional manifold; recovery asks two questions at once: \i{i} which factors are present, and \i{ii} where within each factor the activation lies.
These two aspects should not be regularized in the same way.
The first is a sparse selection problem over manifolds, while the second is a manifold learning problem.
In the next section, we will show that block sparsity is \textit{precisely} the prior that separates these roles: it enforces sparsity across factors while allowing dense variation inside each active factor.

\subsection{Block Sparsity as the Matched Prior for Manifold Superposition}

One may already sense the connection between this manifold-based data generating process and block sparsity. Here we make it explicit: under a Bayesian treatment of the model, the group-sparse penalty $\|\bm{z}\|_{2,0}$ falls out of the maximum a posteriori estimate, arising as the cost of switching a factor on (full argument in Appendix~\ref{app:matched_prior}).
We add one geometric assumption, i.e., that each factor occupies a low-dimensional linear subspace of activation space, $\mathcal{V}_g = \operatorname{span}(\mathcal{M}_g)$ with $\dim \mathcal{V}_g = \blocksize \ll d$.
This condition is strictly stronger than low intrinsic dimension since a factor of intrinsic dimension $r$ may still spread across many ambient directions.\footnote{We expect this of trained representations: keeping co-active factors in direct sum requires $|S|\,\blocksize \leq d$, so large spans force their subspaces to overlap, and the shared directions are the interference a representation is pressured to minimize. This is the subspace analogue of the RIP condition (block-RIP,~\citealp{eldar2009block}), under which the active subspaces stay incoherent.}

Resolved in the frame of each subspace, the superposition is a sum of block contributions. Each active manifold contributes a point $\bm{m}_g \in \mathcal{V}_g$, and fixing an orthonormal frame $\bm{D}_g \in \mathrm{St}(\blocksize, d)$ for $\mathcal{V}_g$, a point on the Stiefel manifold of orthonormal $\blocksize$-frames in $\mathbb{R}^d$, writes that point in coordinates $\bm{z}_g \in \mathbb{R}^{\blocksize}$ as $\bm{m}_g = \bm{z}_g \bm{D}_g$, so that collecting the coordinates into $\bm{z} = (\bm{z}_1,\dots,\bm{z}_G)$ and writing $\bm{\varepsilon} \sim \mathcal{N}(\bm{0}, \sigma^2 \bm{I}_d)$ for the observation noise,
\begin{equation}
\label{eq:linearized_dgp}
\bm{x} = \sum_{g \in S} \bm{m}_g + \bm{\varepsilon} = \sum_{g \in S} \bm{z}_g \bm{D}_g + \bm{\varepsilon}, \qquad \operatorname{supp}_{\mathcal{G}}(\bm{z}) = S.
\end{equation}
If we place a spike-and-slab prior on each block~\citep{mitchell1988bayesian,soussen2011bernoulli}
the recovery problem follows directly.
\begin{proposition}[Block sparsity is the maximum a posteriori (MAP)-matched prior]
\label{lemma:matched_prior}
Let $\rho = \mathcal{U}(\mathcal{B}_R)$ be the uniform density on the ball $\mathcal{B}_R = \{\bm{u} \in \mathbb{R}^{\blocksize} : \|\bm{u}\|_2 \leq R\}$, and $\delta_{\bm{0}}$ the point mass at $\bm{0}$. Under the model of Eq.~\ref{eq:linearized_dgp} with the dictionary $\bm{D}$ fixed and independent spike-and-slab block priors
\begin{equation*}
p(\bm{z}_g) = (1-\pi)\,\delta_{\bm{0}} + \pi\,\rho, \qquad \pi \in (0,1),
\end{equation*}
the MAP estimate of the code is
\begin{equation}
\label{eq:block_l0}
\hat{\bm{z}} = \argmin_{\bm{z}} \tfrac{1}{2}\|\bm{x} - \bm{z}\bm{D}\|_2^2 ~+~ \lambda \|\bm{z}\|_{2,0},
\end{equation}
with $\lambda = \sigma^2 \log\big(\tfrac{1-\pi}{\pi}\operatorname{vol}(\mathcal{B}_R)\big)$, where $\bm{D} = (\bm{D}_1,\dots,\bm{D}_G) \in \mathbb{R}^{\blocksize G \times d}$ stacks the frames and $\bm{z} = (\bm{z}_1,\dots,\bm{z}_G)$ their coordinates. Proof in Appendix~\ref{app:matched_prior}.
\end{proposition}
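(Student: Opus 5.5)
The plan is to compute the negative log-posterior of the code directly and read off Eq.~\ref{eq:block_l0}, after first fixing what ``MAP'' means for a prior that mixes a point mass with a continuous density. Since each block prior $(1-\pi)\delta_{\bm{0}} + \pi\rho$ has no Lebesgue density on $\mathbb{R}^{\blocksize}$, I parametrize the code as a pair: a block support $S = \operatorname{supp}_{\mathcal{G}}(\bm{z}) \subseteq [G]$ and active coordinates $(\bm{z}_g)_{g\in S} \in \mathbb{R}^{\blocksize |S|}$. The reference measure is counting measure on supports times Lebesgue measure on the active coordinates. With respect to this measure the prior has density
\[
p(S, \bm{z}_S) = \prod_{g \notin S} (1-\pi) \prod_{g\in S} \pi\, \rho(\bm{z}_g) = (1-\pi)^{G-|S|}\,\pi^{|S|} \operatorname{vol}(\mathcal{B}_R)^{-|S|} \prod_{g\in S}\mathbb{1}[\|\bm{z}_g\|_2\le R].
\]
MAP is then the maximizer of this density times the likelihood. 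This is the standard convention in the spike-and-slab / Bernoulli--Gaussian literature the paper cites, and I will state it explicitly as the working definition.

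Next I write the likelihood. Under Eq.~\ref{eq:linearized_dgp} with $\bm{D}$ fixed and $\bm{\varepsilon}\sim\mathcal{N}(\bm{0},\sigma^2\bm{I}_d)$,
\[
-\log p(\bm{x}\mid \bm{z}) = \tfrac{1}{2\sigma^2}\|\bm{x}-\bm{z}\bm{D}\|_2^2 + \text{const},
\]
where inactive blocks contribute $\bm{0}$, so $\bm{z}\bm{D} = \sum_{g\in S}\bm{z}_g\bm{D}_g$. Adding the negative log-prior gives
\[
\tfrac{1}{2\sigma^2}\|\bm{x}-\bm{z}\bm{D}\|_2^2 - (G-|S|)\log(1-\pi) - |S|\log\pi + |S|\log\operatorname{vol}(\mathcal{B}_R).
\]
Here I assume each active $\bm{z}_g$ lies in the ball; otherwise the objective is $+\infty$. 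I then drop the $S$-independent constant $-G\log(1-\pi)$ and regroup the remaining terms as $|S|\log\!\big(\tfrac{1-\pi}{\pi}\operatorname{vol}(\mathcal{B}_R)\big)$. Multiplying through by $\sigma^2$, which does not change the argmin, and using $|S| = \|\bm{z}\|_{2,0}$, I obtain exactly Eq.~\ref{eq:block_l0} with the stated $\lambda$. Finally, I note that joint maximization over $(S,\bm{z}_S)$ is the same as minimizing over $\bm{z}\in\mathbb{R}^{\blocksize G}$, because $\bm{z}$ determines $S$.

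The main obstacle is the measure-theoretic point in the first step. A naive ``density'' of the mixture prior is ill-defined, and the value of $\lambda$ depends on choosing Lebesgue measure on active blocks; this choice is why $\operatorname{vol}(\mathcal{B}_R)$ appears with units, so I will remark that $\lambda$ is reference-measure dependent. Two secondary caveats need to be stated rather than proved away. First, the ball constraint $\|\bm{z}_g\|_2\le R$ is implicit: the argmin in Eq.~\ref{eq:block_l0} matches the true MAP when $R$ is large enough that the constraint is inactive at the optimum, and otherwise the statement should be read with the constraint attached. Second, the penalty is a genuine sparsity cost only when $\lambda>0$, that is, when $\tfrac{1-\pi}{\pi}\operatorname{vol}(\mathcal{B}_R)>1$, and I will record this as the regime of interest.
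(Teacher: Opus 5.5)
Your proposal is correct and follows the paper's proof essentially step for step. Both split the log-prior blockwise into $(G-\|\bm{z}\|_{2,0})\log(1-\pi)+\|\bm{z}\|_{2,0}\big(\log\pi-\log\operatorname{vol}(\mathcal{B}_R)\big)$, add the Gaussian log-likelihood, drop constants and multiply by $\sigma^2$; your explicit reference measure (counting on supports times Lebesgue on active blocks) and your ball caveat simply make precise what the paper leaves implicit, and its appendix version does restrict the argmin to $\Sigma=\{\bm{z}:\|\bm{z}_g\|_2\le R\ \forall g\}$.
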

The term of interest that explicitly appears here is  $\|\bm{z}\|_{2,0}$, which represents the number of distinct, non-overlapping groups that activate---i.e., block sparsity.

\begin{wrapfigure}{r}{0.37\textwidth}
    \vspace{-2mm}
    \centering
    \includegraphics[width=0.97\linewidth]{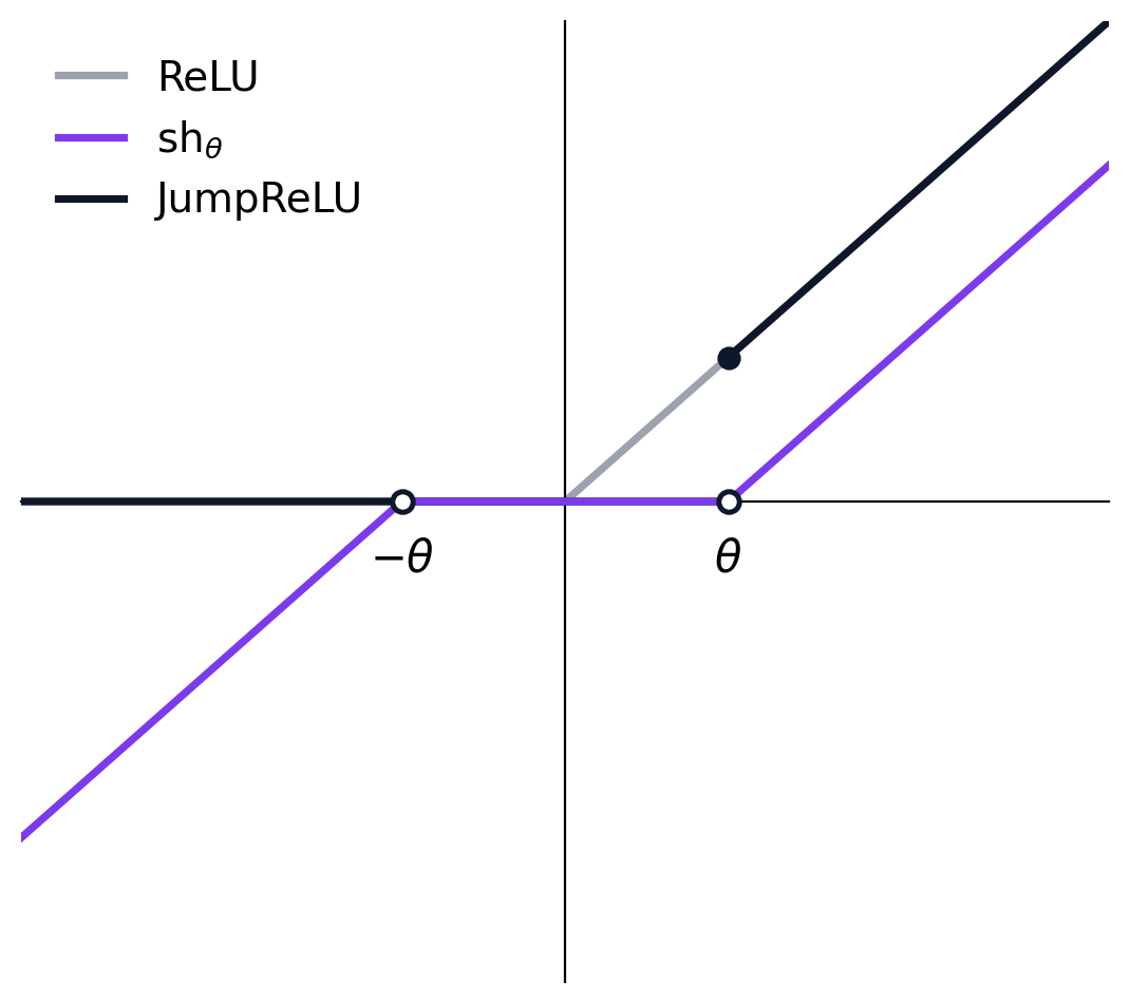}
    \vspace{-1mm}
    \caption{\textbf{The block soft-threshold.} The shrinkage operator (applied on the norm of the group) against ReLU and JumpReLU~\citep{rajamanoharan2024jumping}.}
\end{wrapfigure}
A few remarks follow. 
First, the penalty sees each block only through its norm $\|\bm{z}_g\|_2$ because the same point $\bm{m}_g = \bm{z}_g \bm{D}_g$ can be written in any rotated frame, $\bm{z}_g \bm{D}_g = (\bm{z}_g \bm{Q}^\top)(\bm{Q} \bm{D}_g)$ for $\bm{Q} \in \mathrm{O}(\blocksize)$, 
so a penalty intrinsic to the feature must be ``blind'' to the choice of basis, which leaves only the $\ell_2$ norm. The atomic unit is thus the subspace itself, and not any particular basis of it.
Second, an astute reader will notice that this is just the block analogue of a classical correspondence between Bernoulli priors and $\ell_0$ penalties~\citep{soussen2011bernoulli,olshausen1997sparse}, lifted here to the level of factors~\citep{zhang2013extension,baraniuk2010model}.
Third, Eq.~\ref{eq:block_l0} settles what to optimize without settling how: the support selection is NP-hard in general~\citep{eldar2009block}, admitting no closed form solution, so any practical featurizer must commit to an approximation strategy, whether by relaxing the penalty, selecting blocks greedily, or projecting onto the constraint, and we propose three instantiations of the principle in the next subsection.
\subsection{Block-Sparse Featurizers}
\label{subsec:bsf}

We now propose three Block-Sparse Featurizers (BSFs), and stress that they are candidate architectures: each is one way of implementing the same principle (block sparsity). Our experiments support the general principle transfers, while revealing how concrete implementations trade off against each other.
All three BSFs share the linear decoder $\hat{\bm{x}} = \bm{z}\bm{D}$ of Eq.~\ref{eq:linearized_dgp} and differ only in how the code $\bm{z}$ is produced.

To write them compactly, we introduce two operators that act block-wise, involving each block only through its $\ell_2$ norm. The block projection $\bm{\Pi}_{\topk}$ keeps the $\topk$ blocks of largest norm $\|\bm{z}_g\|_2$ and zeroes the rest---the block analogue of absolute TopK~\citep{gao2024scaling,zhu2025abstopk} and a projection onto the block-sparsity constraint $\|\bm{z}\|_{2,0} \leq \topk$. The second object we need to define is the block soft-threshold activation function~\citep{yuan2006model,puig2009multidimensional}, $\text{sh}_{\theta}$, that shrinks every block toward zero,
\begin{equation*}
\text{sh}_{\theta}(\bm{z})_g = \max(1 - \tfrac{\theta}{\|\bm{z}_g\|_2}, 0) \bm{z}_g,
\end{equation*}
where $\theta \in \mathbb{R}$. This is exactly the proximal operator of the $\ell_{2,1}$ norm~\citep{bach2012structured}. Because both block-wise operators depend on a block only through $\|\bm{z}_g\|_2$, the model selects subspaces and not a particular basis within them. 
This norm-based selection is also why no coordinate-wise nonlinearity appears: a \texttt{ReLU} or \texttt{JumpReLU} would keep only non-negative codes and so restrict each block to a cone rather than the full subspace, a restriction not motivated by Def.~\ref{def:amm} and one that, as we show later, fails to recover superpositions a signed code resolves.
The three BSF architectures are then:
\begin{equation}
\label{eq:bsf}
\bm{z}(\bm{x}) =
\begin{cases}
\;\bm{\Pi}_{\topk}\big(\bm{x}\bm{W} + \bm{b}\big) & \text{(Vanilla BSF)} \\[3pt]
\;\bm{\Pi}_{\topk}\big(\gamma \, \bm{x}\bm{D}^\top\big), \quad \bm{D}_g \in \mathrm{St}(\blocksize, d) & \text{(Grassmannian BSF)} \\[3pt]
\;\text{sh}_{\bm{\theta}}\big(\bm{x}\bm{W} + \bm{b}\big) & \text{(Group Lasso BSF)}
\end{cases}
\end{equation}
All three BSF variants are trained to reconstruct $\min \|\bm{x} - \bm{z}\bm{D}\|_2^2$. The Group Lasso featurizer includes the additional penalty $\lambda \|\bm{z}\|_{2,1}$, while the other two enforce sparsity by construction.
They correspond to three approaches to the problem of Eq.~\ref{eq:block_l0} having no closed form solution. 
The vanilla BSF keeps the constraint on the code and projects, learning a free encoder $\bm{W}$ and decoder $\bm{D}$ untied from each other, so that a linear map produces the code and $\bm{\Pi}_{\topk}$ selects the $\topk$ active blocks. 
The Grassmannian BSF moves the constraint to the dictionary: each block is an orthonormal chart, encoder and decoder are tied, and a single learned scalar $\gamma$ compensates the energy lost by tying. 
The Group Lasso BSF shares this free linear encoder but replaces the projection with the soft-threshold $\text{sh}_{\theta}$, so that blocks are selected by shrinkage rather than by a hard count, relaxing the penalty to its convex surrogate~\citep{yuan2006model} and trading exact sparsity for a smooth objective.
We defer practical details of training to Appendix~\ref{app:implementations}.

The BSFs are ready to train, and the goal of the next section is to evaluates them. We first put the matched-prior claim of Def.~\ref{def:amm} to a controlled test, on a synthetic superposition of manifolds whose ground-truth factors are known by design.
We then turn to real activations, where two choices the objective leaves open must be settled: at what fidelity an activation counts as reconstructed, and how large the block dimension $\blocksize$ should be, since reconstruction alone favors making it large (Fig.~\ref{fig:pareto}, bottom). An information-theoretic reading then locates the sweet spot for both.

\section{Evaluating Block Sparse Featurizers}

\begin{figure}[t]
    \vspace{-10mm}
    \centering
    \includegraphics[width=0.99\linewidth]{assets/recovery_3d.jpg}
    \caption{\textbf{Block sparsity recovers an additive manifold superposition.}
    A controlled instance of Def.~\ref{def:amm}: $M$ known low-dimensional manifolds embedded in $\mathbb{R}^d$ and summed $|S|$ at a time (here six factors, one per row). The leftmost column is the ground-truth contribution $\bm{m}_i$; each remaining column is the contribution recovered by a featurizer, projected into the true factor's $3$-D principal frame and colored by that frame (hue $=$ position on the manifold). The three BSFs (Grassmannian, Vanilla, Group~Lasso) return the manifolds faithfully; the classical SAE and the as-published SMIXAE and MFA featurizers shatter or collapse them; the two rightmost ``fixed'' columns apply the block-sparse repairs described in the text.}
    \label{fig:toy_recovery}
    \vspace{-7mm}
\end{figure}

\subsection{Toy model of Manifold Superposition}
\label{subsec:toma}

The data-generating process of neural network activations has underlying factors that are unobserved, and so recovery ``in the wild'' can be assessed only indirectly.
Therefore, we begin in a controlled setting where the data-generating process is known by design and so we can test the central claim of Def.~\ref{def:amm} head on: that block sparsity is the prior matched to an additive superposition of low-dimensional manifolds. 
Following Def.~\ref{def:amm}, we generate synthetic data as in Eq.~\ref{eq:dgp}: $M$ primitive factors, a mix of one-dimensional concept atoms and curved manifolds (circles, spheres, tori, ...), are embedded into $\mathbb{R}^d$ through random orthonormal maps, and each data point is the sum of $|S| \ll M$ of them. Because the active set $S$ and every contribution $\bm{m}_i$ are known, recovery can be scored directly: we match each primitive to the block whose firing best predicts its presence, and report, per primitive, the fraction of the variance of $\bm{m}_i$ that block reconstructs ($R^2$). Full settings are deferred to Appendix~\ref{app:toma}.

\textbf{BSFs recover the superposition.} Figure~\ref{fig:toy_recovery} shows the recovered manifolds, rendered in the two-resolution reading of Sec.~\ref{subsec:bsf} (block norm for presence, PCA-of-contributions for intrinsic geometry), and Figure~\ref{fig:toy_leaderboard} the per-block recovery. All three BSFs recover the factors close to the oracle ceiling---per-block $R^2$ between $0.93$ and $0.97$, against an oracle of $0.99$---using a single amortized forward pass with no iterative inference. The classical TopK SAE instead shatters each multi-dimensional factor across atoms and cannot isolate a single primitive's contribution, so its per-block reconstruction collapses ($R^2 \approx 0.53$) and its recovered clouds degenerate to scattered slivers rather than the manifolds the BSFs return.
\begin{wrapfigure}{r}{0.45\textwidth}
    \vspace{-3mm}
    \centering
    \includegraphics[width=0.97\linewidth]{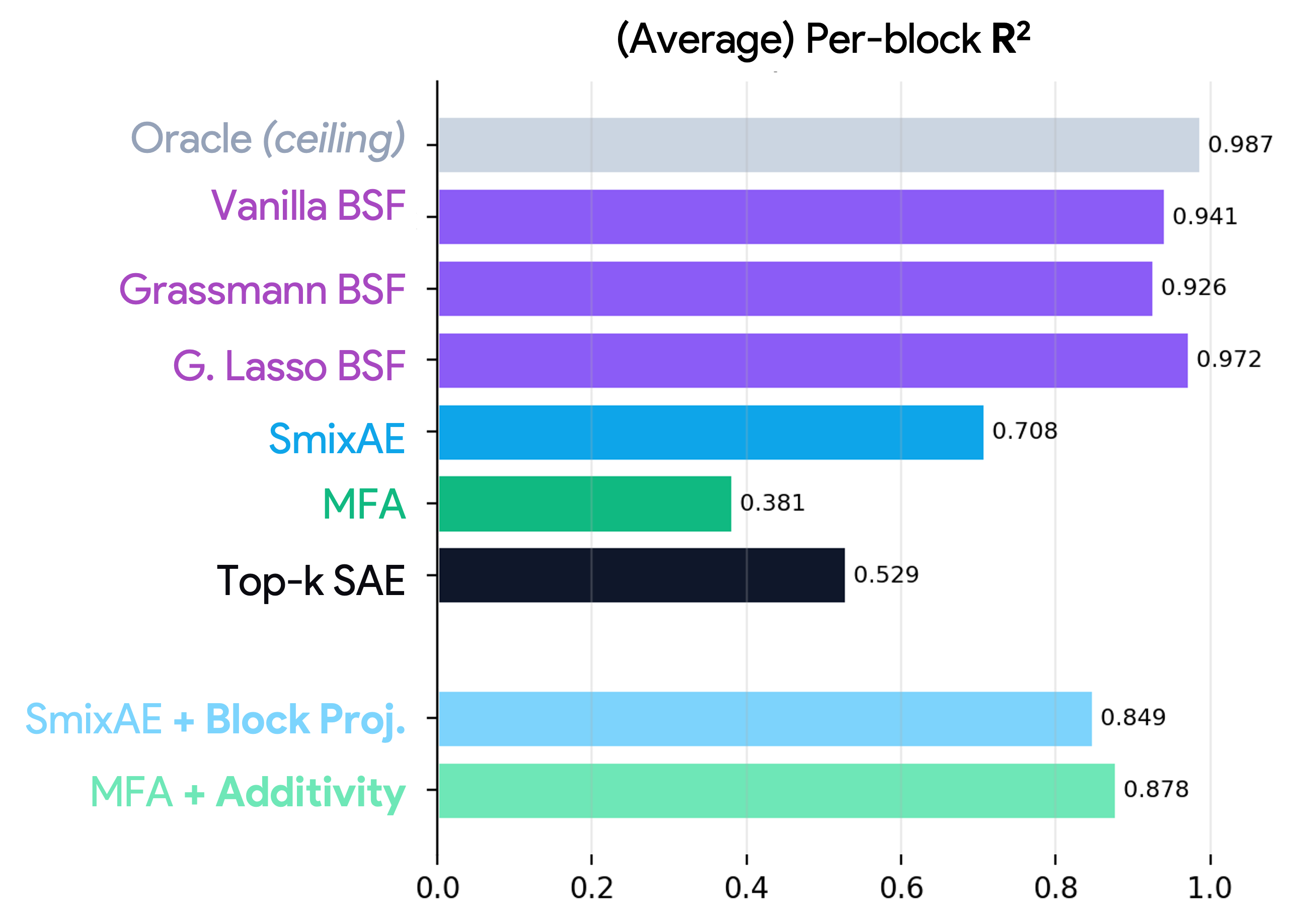}
    \vspace{-2mm}
    \caption{\textbf{Per-block recovery.} Per-block $R^2$, averaged over factors. The block-sparse featurizers (purple) sit near the oracle ceiling; the two repaired variants (bottom) recover well above their original forms.}
    \label{fig:toy_leaderboard}
    \vspace{-7mm}
\end{wrapfigure}
\textbf{The same principle repairs SMixAE and MFA.} The toy also lets us probe two recent featurizers that, like ours, take a multi-dimensional unit but are built on different assumptions: SMixAE~\citep{francel2026smixae} and MFA~\citep{shafran2026directions}. Both underperform on the additive toy (per-block $R^2 \approx 0.71$ and $0.38$), and in each case the cause is a departure from block sparsity that a small change can repair. 

For SMixAE, the authors already note its failure on this toy task, and we find that replacing the rectifier with a signed code (motivated by the cone argument of Sec.~\ref{subsec:bsf}) and substituting our block projection $\bm{\Pi}_{\topk}$ for its activation function lifts recovery to $R^2 \approx 0.85$.

MFA fails for a more fundamental reason, one that separates a mixture from a sum. 
The assumed data-generating process of MFA is a single component drawn per data point.
This means that the points it can generate lie in the union of its component subspaces rather than in their sum, and at inference it returns a convex combination of those components (weights sum to one).
When a factor and another are both present, the reconstructed data point lands in the subspace they jointly span, which no convex combination of the individual subspaces reaches, since such a combination stays within the hull of its constituents while a genuine sum extends beyond it.

Another way to see the difference is that the weights the model assigns express uncertainty over which one component is responsible rather than the co-presence of several, so MFA can recover the individual subspaces but not their sum (for reasons explained by~\cite{bhalla2026sparse}, Appendix C), which is why its per-block reconstruction caps low. 
Yet, decoding the learned subspaces additively rather than through this convex posterior raises recovery to $R^2 \approx 0.88$.
We stress that this additive decode is a heuristic relaxation rather than a coherent generative model, we therefore report it as a promising variant rather than a contribution of this work, and detail both fixes and our derivation of the implicit data-generating process of MFA in Appendix~\ref{app:toma}.

With recovery established against known ground truth, we turn from synthetic data to activations from neural networks, where no ground truth is available. Now, the featurizers can instead be judged by how compactly and faithfully they describe what the model computes.

\subsection{Faithfulness of Reconstruction}

\begin{figure}
    \vspace{-12mm}
    \centering
    \includegraphics[width=0.90\linewidth]{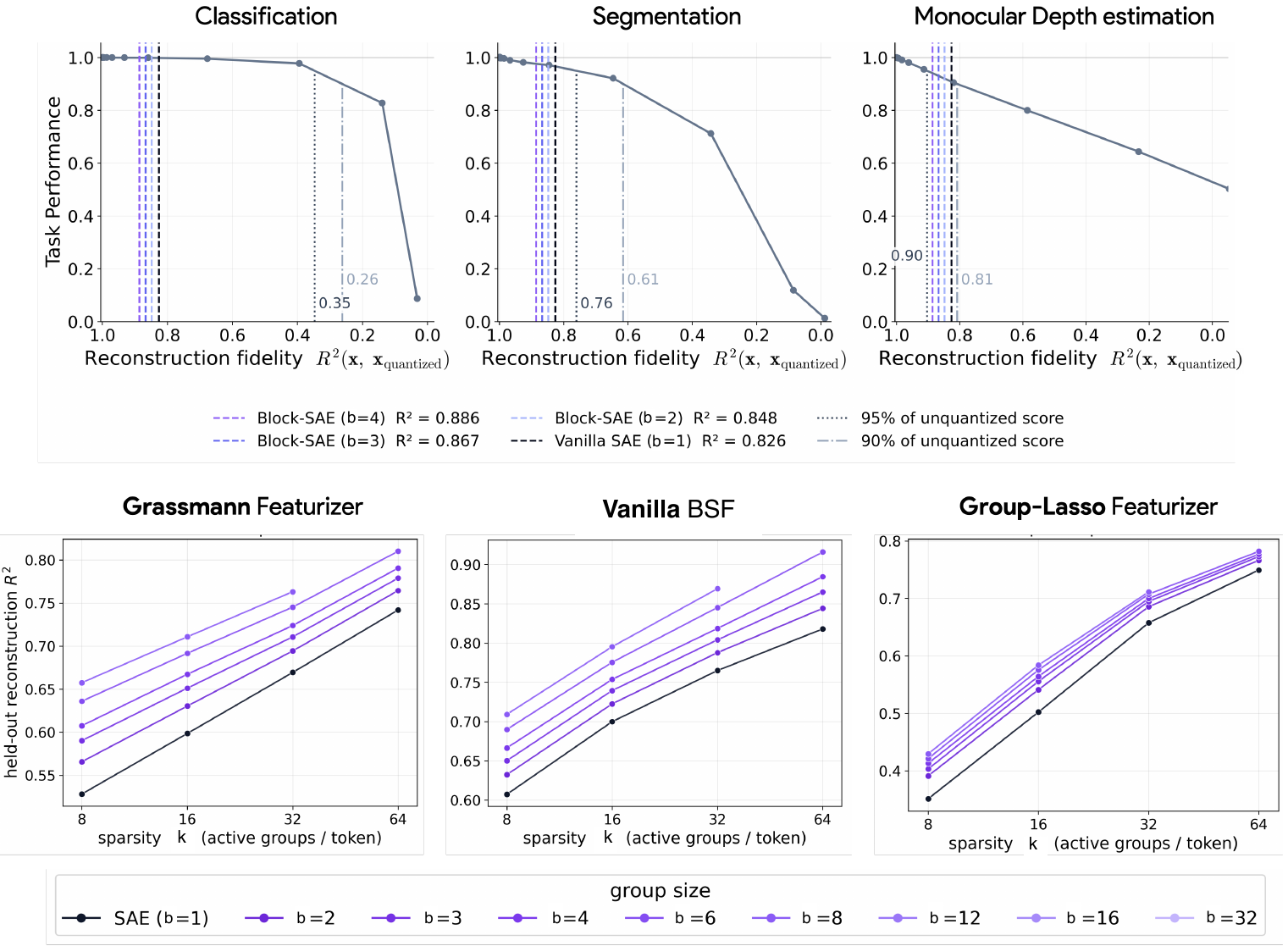}
    \caption{\textbf{Distortion estimation and reconstruction sparsity trade-off.} \textbf{(top)} Estimating the reconstruction fidelity each task requires. DINOv3 activations are degraded by progressive quantization and passed to a frozen linear probe; the relative task metric is plotted against the fidelity $R^2(\bm{x}, \bm{x}_q)$ of the corrupted features. Performance is essentially unaffected down to $R^2 \approx 0.8$ for classification and segmentation, whereas depth estimation is more demanding, requiring $R^2 \approx 0.9$ to retain $95\%$ of its accuracy. These task-dependent floors fix the distortion $\delta$ at which description length is read. \textbf{Held-out $R^2$ against block sparsity $\topk$ for the three featurizers. (bottom)} Lines are colored by block dimension $\blocksize$ ($\blocksize{=}1$ is the SAE where atoms are directions). Reconstruction rises monotonically with the dictionary width $G$, the sparsity $\topk$, and the block dimension $\blocksize$, so reconstruction fidelity alone cannot rank the featurizers and an external criterion is needed.}
    \label{fig:pareto}
\end{figure}
We begin by studying the degree to which block featurizers faithfully reconstruct activations by measuring variance explained ($R^2$). But in order to ground this, we first establish the noise floor at which a quantized representation can perform some task. This allows us to set a bar for reconstruction our featurizer must hit to be usefully faithful. We measure this by gradually quantizing representations  from DINOv3 more and more and measuring when performance degrades.

Figure~\ref{fig:pareto} (top) estimates this fidelity empirically. Degrading activations by progressive quantization and recording the effect on three downstream tasks, we find that the tolerance is task-dependent: classification and segmentation are essentially unaffected down to $R^2 \approx 0.8$, while depth estimation is more demanding, requiring $R^2 \approx 0.9$ to retain $95\%$ of its accuracy.
We take these task floors as the fidelity a featurizer must reach, reading each as the distortion its task tolerates without becoming less useful.

With the fidelity floor established, we turn to the Pareto frontiers of our featurizers in Figure~\ref{fig:pareto} (bottom), where reconstruction improves monotonically with every structural parameter, as the dictionary widens, as more blocks are permitted to fire, and as the block dimension grows. 
This monotonicity is expected, and it is also what prevents reconstruction quality from ranking the featurizers against one another. 
\subsection{Minimum Description Length as a Comparison Principle}
\label{subsec:mdl}
Next we compare the quality of reconstructions to Sparse Autoencoders (SAEs), which represent concepts one-dimensionally. The comparison is confounded at a fixed reconstruction budget, since the SAEs code is the less constrained of the two, free to activate any combination of atoms, whereas a block-sparse code is bound to its blockwise structure. A criterion that compares them on equal terms must therefore come from elsewhere, and the minimum description length principle (MDL) supplies one naturally, since prior work has appealed to it for similar reasoning~\citep{ayonrinde2024interpretability}.

\begin{figure}[t]
    \vspace{-12mm}
    \centering
    \includegraphics[width=0.99\linewidth]{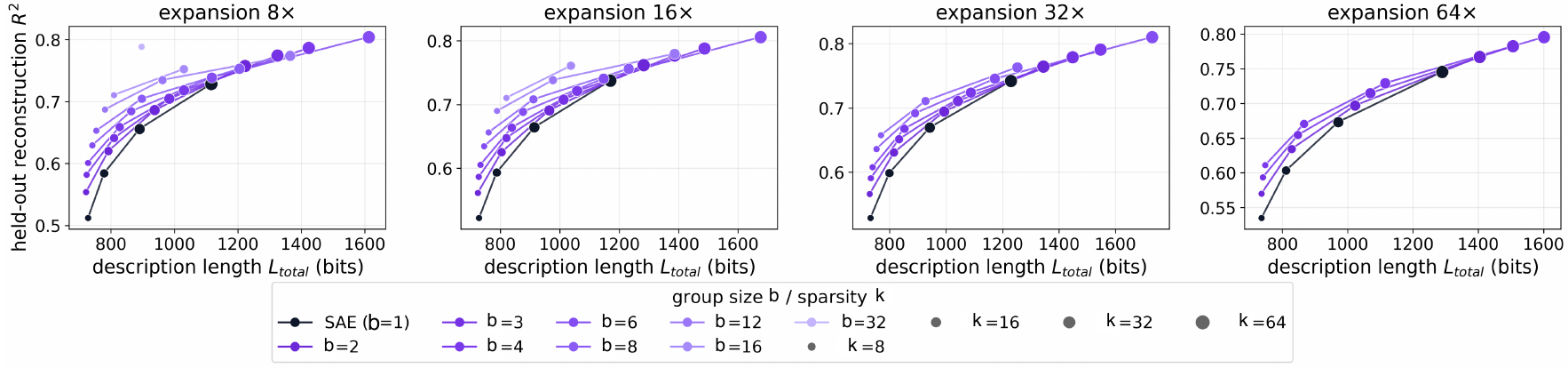}
    \caption{\textbf{Block structure yields a lower description length than SAEs.} Description length $L_\delta(\bm{x})$ (Eq.~\ref{eq:mdl}) for the Grassmannian BSF at the $20\%$ distortion floor, against block dimension $\blocksize$, with one panel per dictionary width $G$ and one curve per block sparsity $\topk$. Codes with $\blocksize{>}1$ describe DINOv3 activations in fewer bits than the $\blocksize{=}1$ SAE across every width and sparsity, the minimum falling at a moderate $\blocksize$ between $2$ and $4$ that eases downward as $G$ widens. Similar results hold for all three featurizers and the $10\%$ floor and are reported in Appendix~\ref{app:mdl}.}
    \label{fig:mdl_grassmann}
\end{figure}
\begin{figure}
    \centering
    \includegraphics[width=0.93\linewidth]{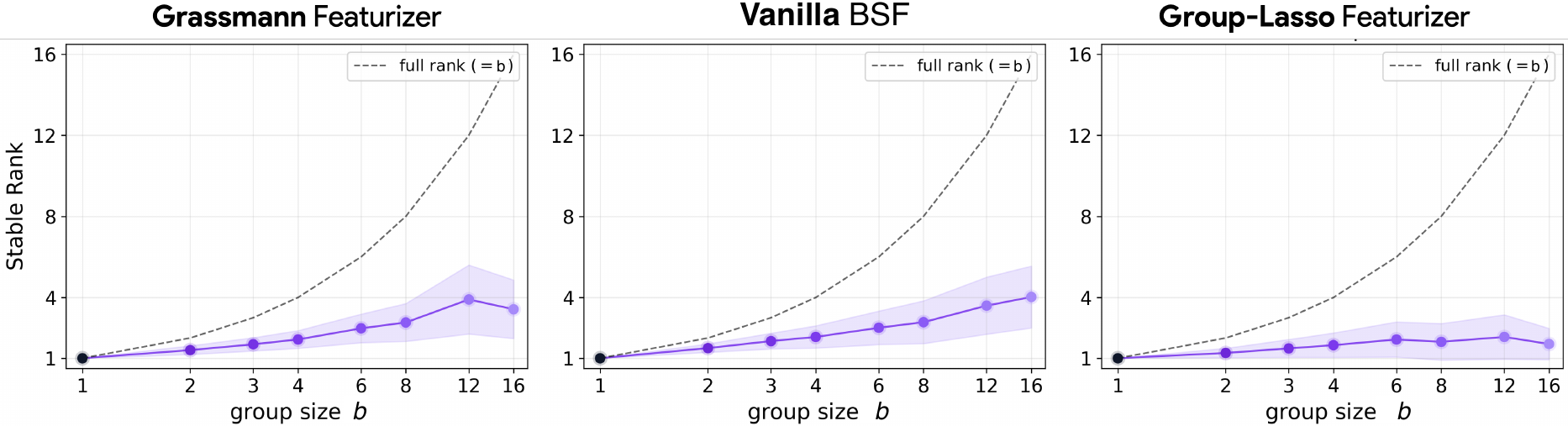}
    \caption{\textbf{Block dimensionality stabilizes between two and four.} Mean stable rank of the per-block code, against the block dimension $\blocksize$ the featurizer was granted. Although blocks are allotted up to $\blocksize=16$ coordinates, the dimension they occupy saturates near $3$, indicating that DINOv3 concepts are on average between two- and four-dimensional regardless of the room made available to them.}
    \label{fig:stable_rank}
\end{figure}

We confine ourselves to a brief account and refer to~\cite{ayonrinde2024interpretability,huang2009structured} and Appendix~\ref{app:mdl} for details.
The principle is the following: a good explanation of a model is one that compresses its activations efficiently, transmitting what occurs in as few bits as possible. We therefore read a sparse code as a compression scheme, carrying the reading from directions to blocks, and ask how many bits are needed to transmit an activation $\bm{x}$ at a chosen distortion $\delta$. 
The distortion is fixed by the task floor estimated above, the fidelity beyond which further reconstruction encodes detail the task does not draw on.
The cost of transmission then decomposes into four parts: which blocks fire, the code they carry, the residual $\bm{r} = \bm{x} - \bm{z}\bm{D}$ they leave unexplained, and the dictionary $\bm{D}$ paid for once across the dataset. Writing $L_\delta(\bm{x})$ for the total number of bits this scheme spends to transmit an activation $\bm{x}$ up to distortion $\delta$, the four parts sum to:
\begin{equation}
\label{eq:mdl}
L_\delta(\bm{x}) =
\underbrace{\log_2\binom{G}{\topk}}_{\text{support } S}
+
\underbrace{L_\delta(\bm{z})}_{\text{code}}
+
\underbrace{L_\delta(\bm{r})}_{\text{residual}}
+
\underbrace{\tfrac{1}{N}\,L(\bm{D})}_{\text{dictionary}}.
\end{equation}

The first term counts which $\topk$ of the $G$ blocks make up the support $S = \operatorname{supp}_{\mathcal{G}}(\bm{z})$, and it is the only term we require in closed form here. The remaining three measure the bits spent on the code $\bm{z}$, on the residual $\bm{r}$ at distortion $\delta$, and on the dictionary $\bm{D}$ amortized over the $N$ tokens it serves, each set out in full in Appendix~\ref{app:mdl}. %

The support term is the most important one, both because it is the largest (scaling with the size of the dictionary) and because it is the term that separates block sparsity from the unstructured sparsity of an ordinary SAE. To state the distinction precisely: an SAE code with $\topk \blocksize$ of its $G\blocksize$ atoms active pays $\log_2\binom{G\blocksize}{\topk \blocksize}$ bits to transmit them, whereas a block-sparse code transmit only its $\topk$ active blocks at the far smaller cost of $\log_2\binom{G}{\topk}$, the $\blocksize$ coordinates inside each active block adding nothing to the index, so that a single block label stands in for $\blocksize$ atoms. %

Enlarging the block dimension therefore trades one cost against another: the residual shrinks, since a larger subspace leaves less unexplained, while the code term grows, since more coordinates must be transmitted per active block. 
Read at a distortion of $\delta = 0.2$, Figure~\ref{fig:mdl_grassmann} shows that the Grassmannian featurizer describe DINOv3 activations in fewer bits than the $\blocksize{=}1$ SAE across every dictionary width and level of sparsity the block codes. The description is shortest at a moderate block dimension, $\blocksize$ from $2$ to $4$, that eases downward as the dictionary widens. This effect is also observed for all three featurizers, at varying noise levels, as reported in Appendix~\ref{app:mdl}.

We have thus seen that, from an MDL standpoint, the inductive bias of block sparsity affords a more compact description of activations. There remains the question of the value of $\blocksize$ itself: what is the optimal block dimension and what is the typical dimensionality of a concept?

\subsection{Concepts Dimensionality}

\begin{figure}[t]
\vspace{-14mm}
\centering
  \includegraphics[width=1\linewidth]{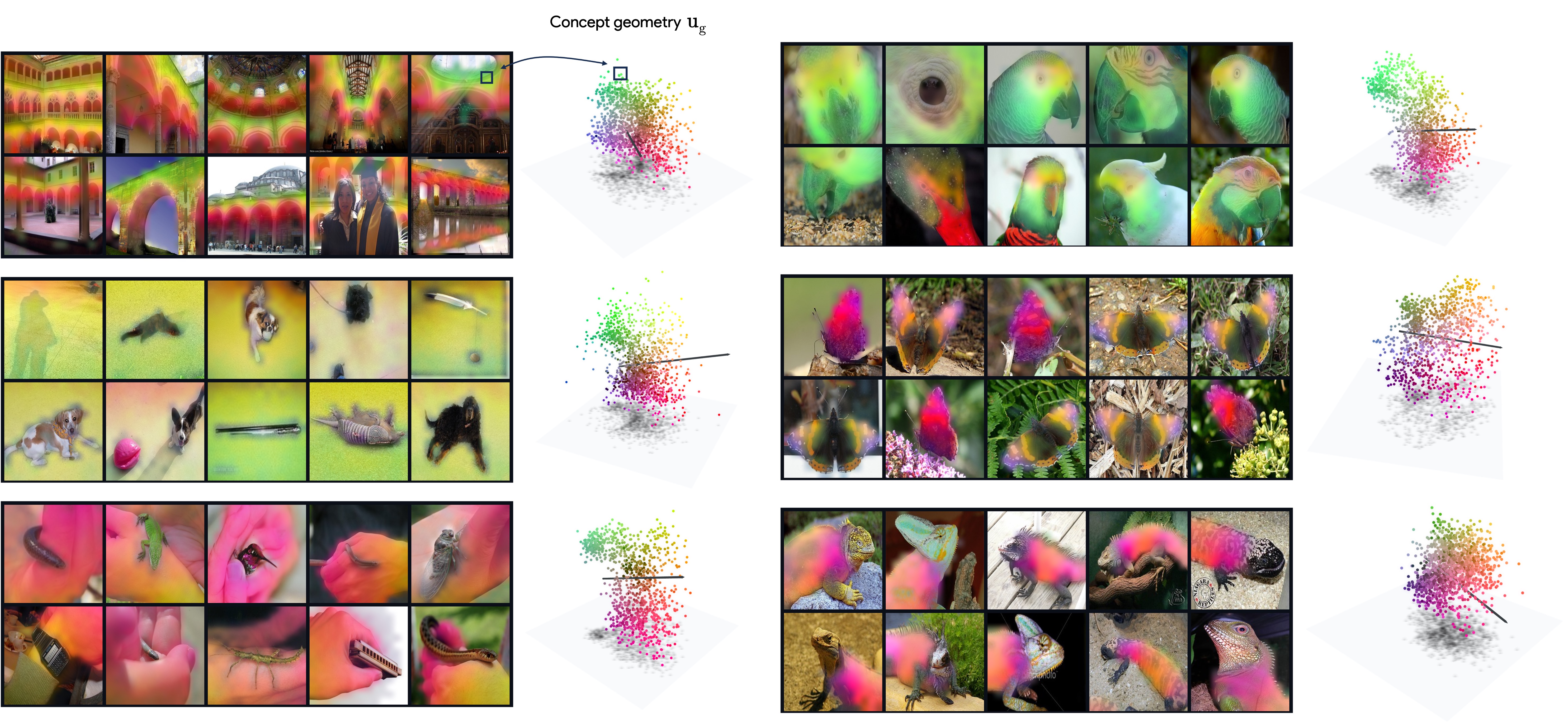}
    \caption{\textbf{A block carries a concept together with its intrinsic geometry.}
    Six blocks from a Grassmann featurizer on DINOv3. For every block we overlay the patches it fires on a sample of images, coloring each patch by the first three principal coordinates of its
    contribution $\bm{u}_g$ mapped to the red, green, and blue channels, so that hue reports where on the concept a patch lies rather than merely how strongly it is present. 
    The black arrow is the closest directional SAE atom contained in the block's subspace, the single direction a $\blocksize{=}1$ code would assign to the concept: it captures one axis of a manifold the block resolves in full, the surrounding spread being exactly the internal variation a directional code discards. 
    }
    \label{fig:qualitative}
    \vspace{-2mm}
\end{figure}

The optimal block dimension found above is a property of the dictionary taken as a whole, and it leaves a more basic question open. When a featurizer grants each atomic unit a $\blocksize$-dimensional block, does the block use all $\blocksize$ dimensions, or does it occupy only a few and leave the rest idle? A block is free to ignore the room it is given, so the dimensions we allocate need not be the dimensions that are used. Suppose a block were used only to produce a single direction, its code $\bm{z}_g$ collapsing to a scalar multiple of one vector, as an SAE atom does. We would then say that, despite the group structure, the block has an effective rank one. A block used to its full extent, by contrast, would describe a $\blocksize$-dimensional subspace, with an effective rank approaching $\blocksize$ when no direction within the block dominates the others.

The stable rank~\citep{rudelson2007sampling} of the per-block code quantifies this, and comparing it against the allotted block dimension reveals how much of the available capacity each block in fact occupies. The mean stable rank saturates between two and four, so that for DINOv3 the average block spans between two and four dimensions, and continues to do so even when the block is allotted substantially greater capacity, up to $\blocksize=16$ (Figure~\ref{fig:stable_rank}).
For DINOv3, we found that blocks are on average roughly three-dimensional, a dimensionality perhaps inherited from the visuospatial nature of the tasks these representations serve.

This places the number in a longer line of work on representational dimensionality. The classical subspace models fixed the dimension of a block by hand---independent subspace analysis, for instance, used subspaces of a preset size~\citep{hyvarinen2000emergence}---whereas the stable rank lets the data choose it, and the value it returns sits within the range those models assumed. It is, moreover, a dimension \emph{per concept} rather than of the population as a whole, complementing measures of the global dimensionality of cortical and model representations~\citep{stringer2019high,fusi2016why}. 

Having completed the quantitative study of these featurizers, we turn to a qualitative one. We first clarify how to read a block-sparse code, and then exhibit several examples of the structures such featurizers recover.

\subsection{How to Interpret a Block Sparse Featurizer}

\begin{figure}[t]
    \centering
    \vspace{-13mm}
    \includegraphics[width=0.99\linewidth]{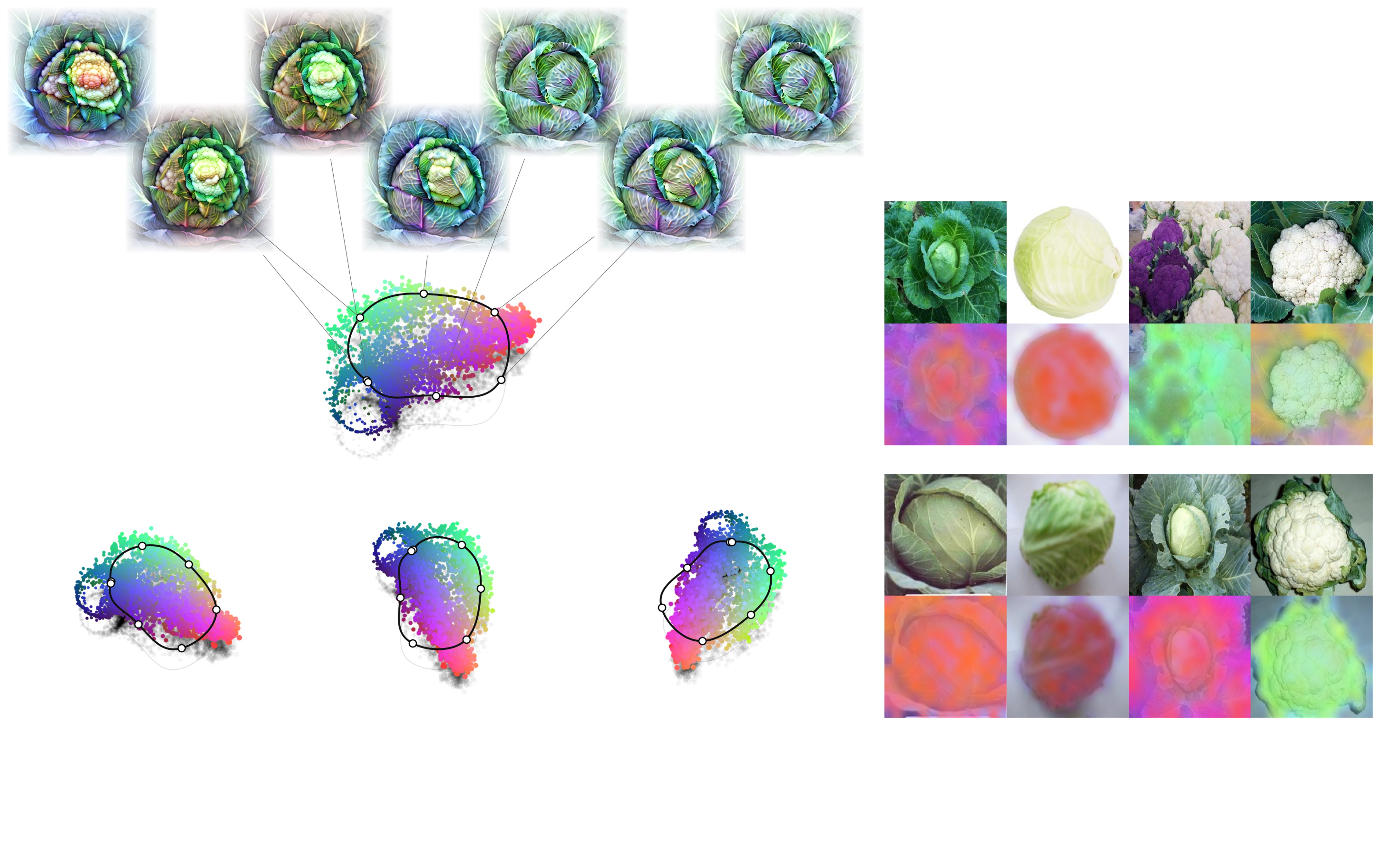}
    \vspace{-12mm}
    \caption{\textbf{Feature visualization on one recovered concept manifold.}
    A single Grassmannian BSF block trained on DINOv3 recovers a cabbage/cauliflower manifold. 
    The central colored point cloud is a projection of the block's active contributions, with hue encoding the first three principal coordinates of the block's intrinsic geometry. 
    White points mark locations sampled along paths through this cloud, and the images above are MACO~\citep{fel2023unlocking} feature visualizations targeted at those locations. 
    As the target moves through the manifold, the synthesized images change smoothly between leafy cabbage structure and cauliflower floret. 
    The natural-image montages on the right ground the same regions in data, showing original patches together with overlays of block activation and manifold coordinate.}
    \label{fig:feature_viz_cauliflower}
    \vspace{-3mm}
\end{figure}

A block-sparse code makes two quantities available per block where a classic SAE code makes only one, and it is worth fixing how each is read before turning to what the featurizers recover. 
We recall that a block fires only when it survives the selection $\bm{\Pi}_{\topk}$, that is when its norm $\|\bm{z}_g\|_2$ ranks among the $\topk$ largest, and this norm is the first of the two quantities: it reports how strongly the concept carried by block $g$ is present, exactly as the activation of an unstructured code does. So the familiar reading of an SAE remains unchanged and a per-token heatmap of $\|\bm{z}_g\|_2$, painted in a single color, gives a coarse view of where a concept is active.

The second quantity has no counterpart in an SAE code, since the contribution of an active block, $\bm{m}_g = \bm{z}_g\bm{D}_g$, ranges over the $k$-dimensional row space of $\bm{D}_g$ rather than along a single fixed direction, so the contributions a block produces across the tokens that fire it trace the intrinsic geometry of the concept, the factors of variation along which it moves once present. 
Reading that geometry asks for a basis, and the gauge invariance discussed before means that the learned chart is defined only up to an $O(\blocksize)$ rotation, no coordinate of $\bm{z}_g$ is privileged over its rotated counterpart. 
Thus, rather than read the arbitrary basis that the featurizer happens to learn, we fit a principal component analysis (PCA) to the contributions $\bm{m}_g$ that a block produces, and we read the code in the resulting basis, which orders the concept's variation by salience. The two readings are then
\begin{equation*}
\begin{gathered}
\text{Concept presence:}\ \ \|\bm{z}_g\|_2
\qquad\qquad
\text{Concept intrinsic geometry:}\ \ 
\bm{u}_g = \bm{z}_g\bm{D}_g\bm{V}_g^\top , \\[4pt]
\text{where}\quad
\bm{V}_g = \argmax_{\bm{V}\bm{V}^\top = \bm{I}_k}\ \big\|\bm{M}_g\bm{V}^\top\big\|_F^2 .
\end{gathered}
\end{equation*}
The basis $\bm{V}_g$ consists of the principal components of the contributions $\bm{M}_g$ that the block produces over the tokens it fires, and the coordinates $\bm{u}_g$ are the code expressed in that basis, their entries being the concept's factors of variation in decreasing order of salience.
Reading a feature thus proceeds at two resolutions, and each maps onto a direct visualization (as shown in Figure~\ref{fig:qualitative}). The block norm $||\bm{z}_g||_2$ reports where a concept is present, rendered as a single-color heatmap over the patches in the manner of an SAE; the coordinates $\bm{u}_g$ report how it varies once active, rendered by assigning the first three principal coordinates to the red, green, and blue channels. Thus, the hue at a patch in the image reads off where on the concept manifold that patch lies as shown in Fig.\ref{fig:intro}.

With our strategy to interpret the featurizers in place, we turn to what they recover, starting in a setting with independently known geometry to validate the method. We then demonstrate a state-of-the-art representation manipulation method where we show that we can improve downstream performance using this BSF basis as well as uncover new types of concepts (shading and shadow). Finally, we will use a generative model to use the recovered manifold as a coordinate system for intervention.

\section{Revisiting How InceptionV1 Processes Curves}
\label{section:inception}

\begin{figure}
\vspace{-12mm}
\centering
\includegraphics[width=0.99\linewidth]{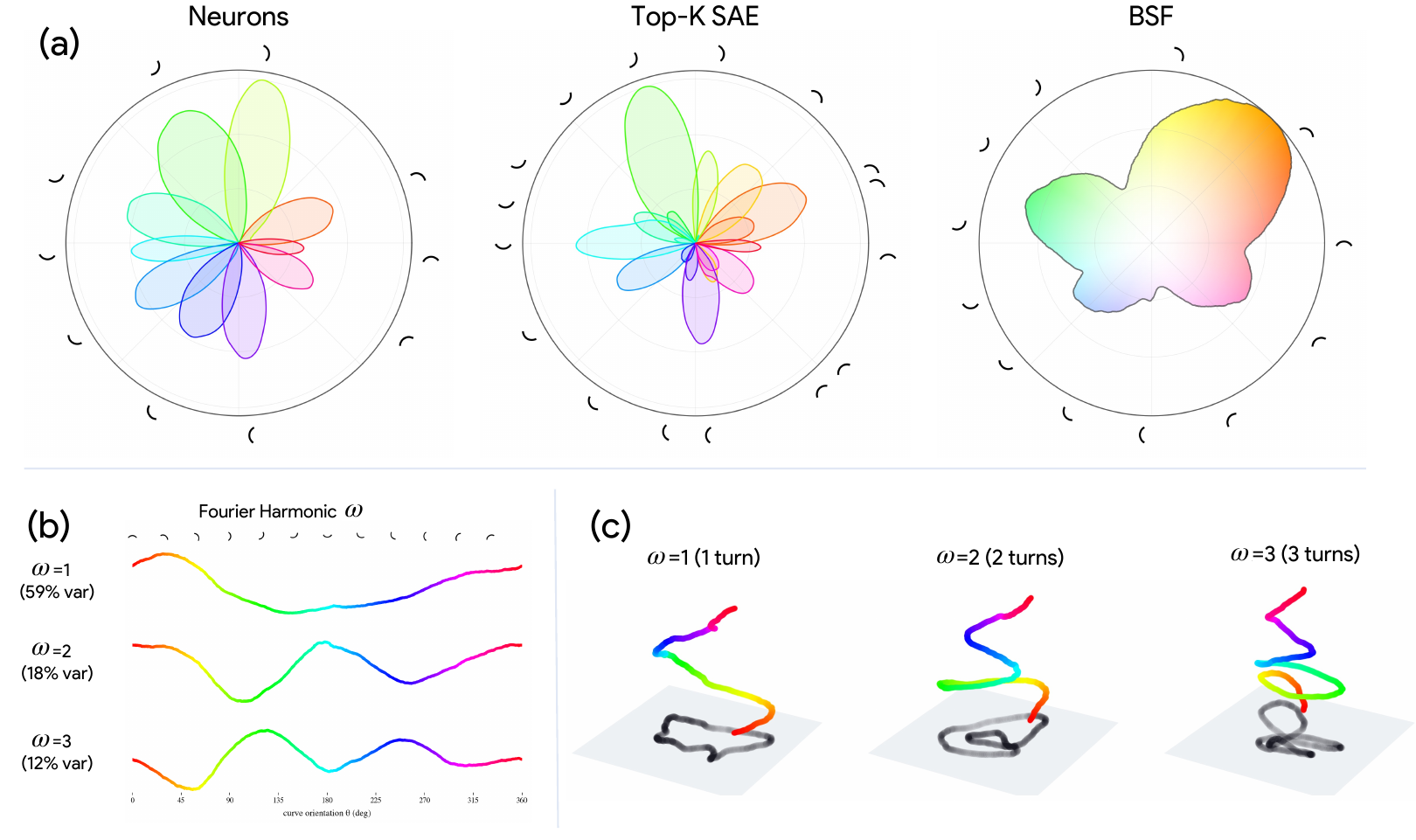}
\caption{\textbf{A single block recovers the InceptionV1 curve manifold, and discovers higher-order Fourier modes.}
\textbf{(a)} Radial tuning curves on synthetic oriented stimuli: individual neurons~\citep{cammarata2020curve} and Top-$k$ SAE atoms~\citep{gorton2024missing} each fire for a narrow wedge of orientations, shattering the family
into many petals, whereas a single BSF block covers all orientations as one connected region. \textbf{(b)} Decomposed along orientation, the block carries more than the orientation circle: its leading components are higher-order Fourier modes, the first three ($\omega,{=}\,1,2,3$) capturing $89\%$ of its variance ($59/18/12\%$), far from random (see Appendix~\ref{app:inception}) with the harmonics beyond the third sharing the remaining $11\%$, none rising near the first three. \textbf{(c)} Projected onto its harmonic-$\omega$ plane, the block traces a loop that winds exactly $\omega$ times.}
\label{fig:curves}
\end{figure}

To evaluate our BSFs against a known ground truth, we begin with a setting whose answer is known in advance. \citet{cammarata2020curve} famously demonstrated that there is a group of neurons in the early layers of InceptionV1 that function as detectors for curves of different orientations. 
Subsequently, this analysis was expanded by~\citet{gorton2024missing} using SAEs. Using BSFs, we show that the neurons and SAE features both provide a fractured view of a unified manifold that represent a curve's orientation.

As a population, both neurons and SAE features tile orientation of curves from $0^\circ$ to $360^\circ$.
The original neurons cover orientation coarsely, leaving ``gaps'', and while SAE atoms fill most of those gaps, they still shatter the manifold~\citep{sengupta2018manifold,michaud2025understanding, bhalla2026sparse}, spending a separate atom on each slice of orientation. This is the predicted optimum of a unstructured prior applied to a curved factor (App.~\ref{app:matched_prior}). 

This multiplicity of SAEs is an artifact of the featurizer---not a property of the network. Under the assumption~\ref{eq:dgp}, the BSFs make the following prediction: a block-sparse code, free to assign a low-dimensional subspace to a single feature, should hold the manifold together where a directional code fragments it. 
A single Grassmannian BSF block does exactly this, staying active across all orientations and reconstructing the whole curve manifold as one connected region (Figure~\ref{fig:curves}a, right): the tiled petals of the directional readings collapse into one feature that carries the orientation circle as its internal geometry.
Recovering the manifold in one block lets us ask how orientation is encoded within it, a question the shattered representation could not pose. We grant the featurizer block of $\blocksize{=}16$ dimensions---far more than a circle requires---and ask how many it uses and to what end.

Sweeping synthetic curves over orientation $\theta$ following \citet{cammarata2020curve}, we record the block's contribution $\bm{m}(\theta)$, a closed curve since $\theta$ is periodic, and decompose it into Fourier harmonics, each harmonic $\omega$ spanning a plane on which $\bm{m}(\theta)$ traces a loop that winds exactly $\omega$ times---the signature of a $\omega$-fold periodic mode (Figure~\ref{fig:curves}c). 
The block spreads its variance across several of these modes, the first three accounting for $89\%$ ($59/18/12\%$): $\omega{=}1$ covers $360^{\circ}$ directional orientation; $\omega{=}2$ is invariant to a $180^\circ$ flip, encoding orientation modulo direction; and $\omega{=}3$ is invariant to $120^\circ$.%
\footnote{Representing a periodic quantity by a small number of Fourier modes is a strategy familiar from classical models of early visual cortex. In the \emph{energy model}~\citep{adelson1985spatiotemporal}, a feature is built by summing the squared outputs of a few filters that share an orientation but differ in phase, so it reports that an oriented pattern is present while staying insensitive to its exact position. Our curve block does the same one level up: it encodes a curve's orientation $\theta$ through a few Fourier modes of $\theta$, rather than the phase of a local edge. An analogous Fourier structure also appears in language models computing modular addition~\citep{kantamnenihelix,zhoufourier,zhou2025fone,fu2026convergent,feucht2026arithmetic}.}
This contextualizes the observation of~\citet{olah2020naturally} that certain families of curve detecting neurons wrap at $180^\circ$ rather than $360^\circ$; these neurons are in fact reading from the second harmonic discovered by our BSFs.
The curve block encodes orientation as an oriented-energy code, much as regions of the brain responsible for visual processing (V1 and V2) do. Higher in the ventral stream, area V4 reads a boundary's curvature together with its position on the object~\citep{pasupathy2001shape,pasupathy2002population}; the same block construction predicts that this richer contour code should appear in deeper layers. Because a sufficiently large Fourier basis can express any representation, we report a null baseline in Appendix~\ref{app:inception}.

\section{Block Sparse Featurizers on a Vision Foundation Model}
\label{section:bsf_dinov3}

\begin{figure}[t]
    \centering
    \includegraphics[width=\linewidth]{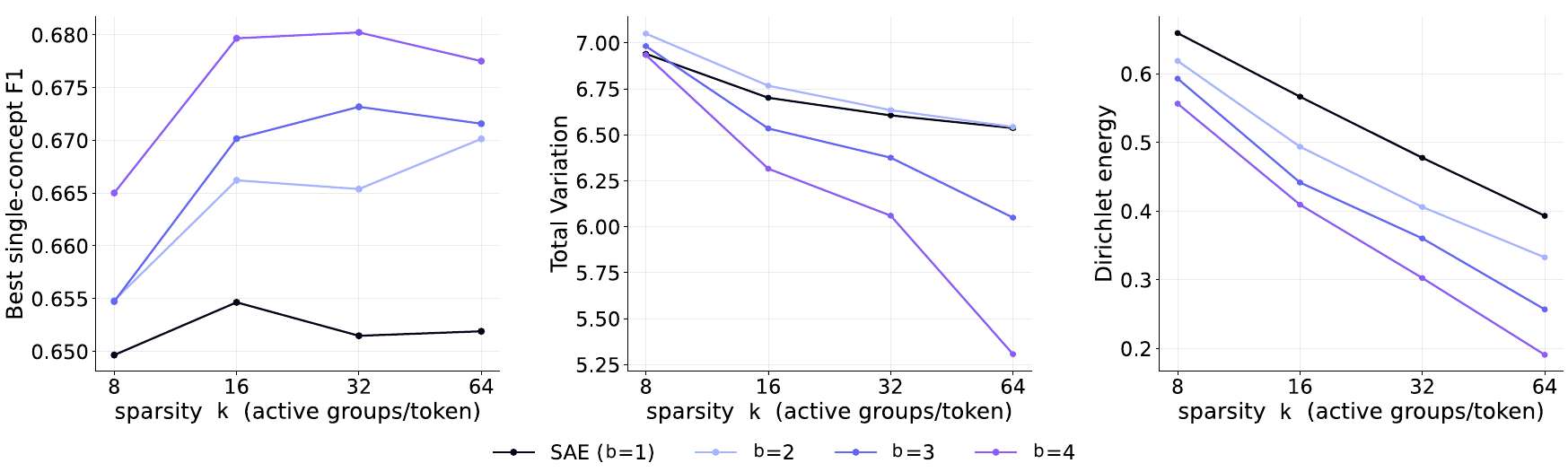}
    \caption{\textbf{Block structure yields class selective, spatially coherent concepts.} Best single-concept detector F1 (left) and concept-map smoothness measured by total variation (center) and Dirichlet energy (right), against sparsity $\topk$ on the native $W{=}32{,}768$ grid; across every $\topk$ the block-SAEs ($\blocksize{>}1$) reach higher F1 and lower total variation and Dirichlet energy than the $\blocksize{=}1$ SAE, the gain widening with $\blocksize$.}
    \label{fig:f1_tv_dirichlet}
\end{figure}

To evaluate our BSFs on a state-of-the-art foundation model, we turn to manifold discovery in the representations of the self-supervised vision model DINOv3~\citep{simeoni2025dinov3}.
Unless noted otherwise, we train a Grassmannian BSF with block size $\blocksize=3$ on final-layer ViT-B patch activations.

\subsection{Class-Selective and Spatially Coherent Concepts}

To test whether the blocks correlate with concepts, we treat the codes as single-concept class detectors. 
We train BSFs of varying block dimension $\blocksize{=}3$ on ImageNet-1k activations, freeze both the backbone and the featurizer, and for each ImageNet-1k class select the single concept whose codes best detect the class. 
We compare against a SAE (the $\blocksize{=}1$ vanilla BSF). 
The best single-concept F1 generally rises with the block sparsity $\topk$, and the BSF codes surpass the SAE at every sparsity with the gap widening as $\blocksize$ increases (Figure~\ref{fig:f1_tv_dirichlet} left). 
This pattern is consistent with blocks that carry finer, more class-selective concepts than the directional codes.

If a block tracks a spatially localized attribute, the map of where it fires across an image should itself be spatially coherent, which we measure through the total variation and Dirichlet energy of the concept maps. Painting each patch with the concepts most active on it and summing the variation over the $4$-connected grid (Appendix~\ref{app:tv}), we find the BSF codes vary less than the SAE codes at matched sparsity, the gap widening as the dictionary grows (Figure~\ref{fig:f1_tv_dirichlet}, center and right); at $\topk{=}64$, the maps are roughly five times smoother at $\blocksize{=}4$ than at $\blocksize{=}1$. The recovered concepts are thus spatially coherent, as one would expect of features that track slowly moving contiguous attributes of a scene, the property the next result makes precise by identifying one such attribute exactly.

\subsection{Discovering a Concept Manifold for a Physical Variable: Lighting and Shadow}

\begin{figure}[t]
    \centering
    \includegraphics[width=1\linewidth]{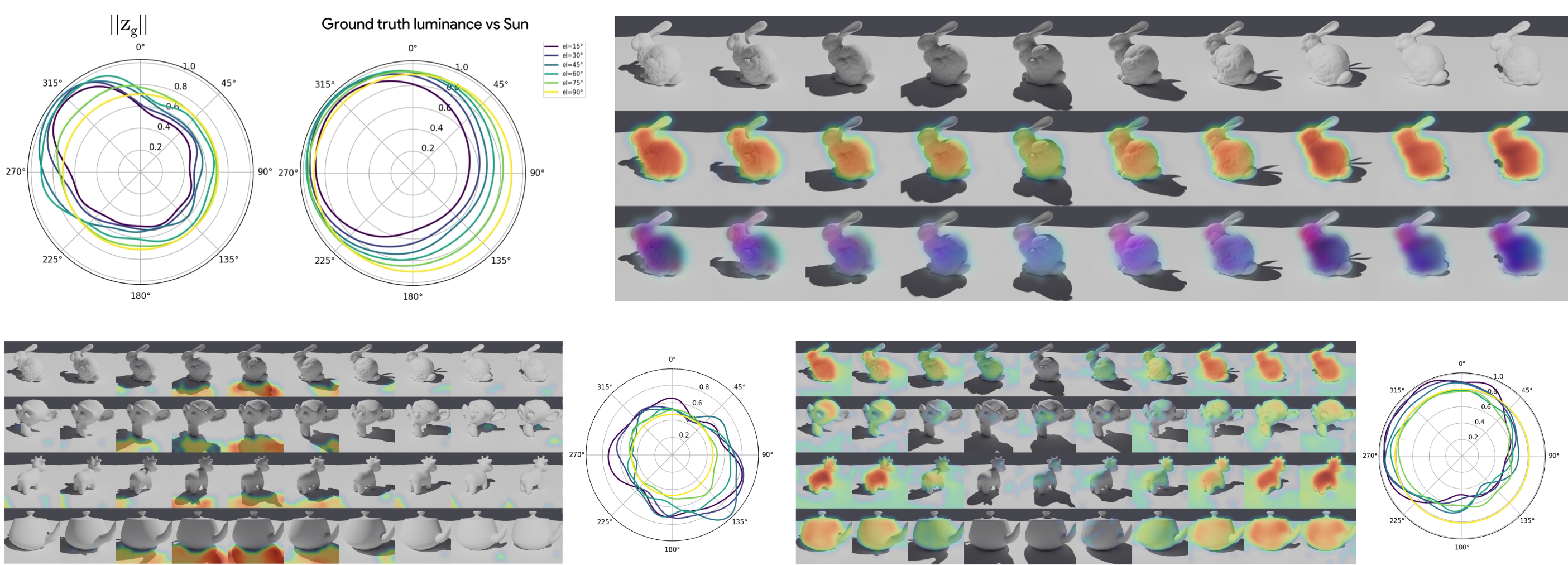}
    \caption{\textbf{Recovered concepts track scene illumination, consistently across objects.} A Grassmannian BSF on DINOv3 is probed on a series of twelve Blender models rendered while the sun sweeps in azimuth and elevation.
    \textbf{(Top left)} The block norm $\|\bm{z}_g\|$ of the luminance block against sun azimuth, one curve per elevation in $\{15^\circ,\dots,90^\circ\}$;
    \textbf{(Top middle)} the ground-truth luminance over the same sweep, which the block norm follows at every elevation;
    \textbf{(Top right)} the recovered luminance concept overlaid on one model across the azimuth sweep, the second row showing concept presence~($\|\bm{z}_g\|$) and the third row its intrinsic geometry~($\bm{u}_g$).
    \textbf{(Bottom)} A recovered shadow concept and the same luminance concept of the top row, overlaid with their azimuthal tuning on four further models (bunny, monkey, cow, teapot); both track illumination regardless of object identity.}
    \label{fig:shadow}
\end{figure}

The spatial coherence of the maps suggests that some blocks may track physical attributes of a scene rather than the objects in it.\footnote{Consistent with~\citet{fel2025into}, who report traces of concepts for lighting and shadow in register tokens.} 
We test this possibility by searching for a block containing a manifold capturing lighting and shadows, because this concept can be precisely controlled. 
We render a series of twelve Blender models under controlled illumination, sweeping the sun's azimuth and elevation, and rank blocks by how their contribution $\bm{m}_g$ co-varies with the lighting parameters across the whole series.
A block scoring high must respond to illumination consistently across objects rather than to any single shape. 
Two blocks stand out: one tracks the scene luminance, i.e., the amount of sunlight reaching the object, and one tracks the volume of cast shadow in the image. 
Both blocks fire on every model in the series, so the concept they carry is attached to the lighting and not to the identity of the object lit.

Because the renderer fixes the lighting, we can check the recovered concept against ground truth. 
Traced against the sun's azimuth, the block norm $\|\bm{z}_g\|$ follows the ground truth luminance curve at every elevation (Figure~\ref{fig:shadow}). 
Read at the two resolutions of the interpretation, the block norm localizes the lit and shadowed regions of each image while the internal coordinate orders the degree of illumination. 
The lighting that an SAE would fragment into many isolated directions, one per region and intensity, is captured by a single block in our BSF.

Having found the recovered structure to be useful for downstream prediction and, in this case, identifiable with a variable of the world, we turn finally to a generative model to evaluate whether blocks are effective for control via activation steering.

\section{Steering Diffusion Models}
\label{section:steering_diffusion}

\begin{figure}[t]
\vspace{-16mm}
    \centering
    \includegraphics[width=0.96\linewidth]{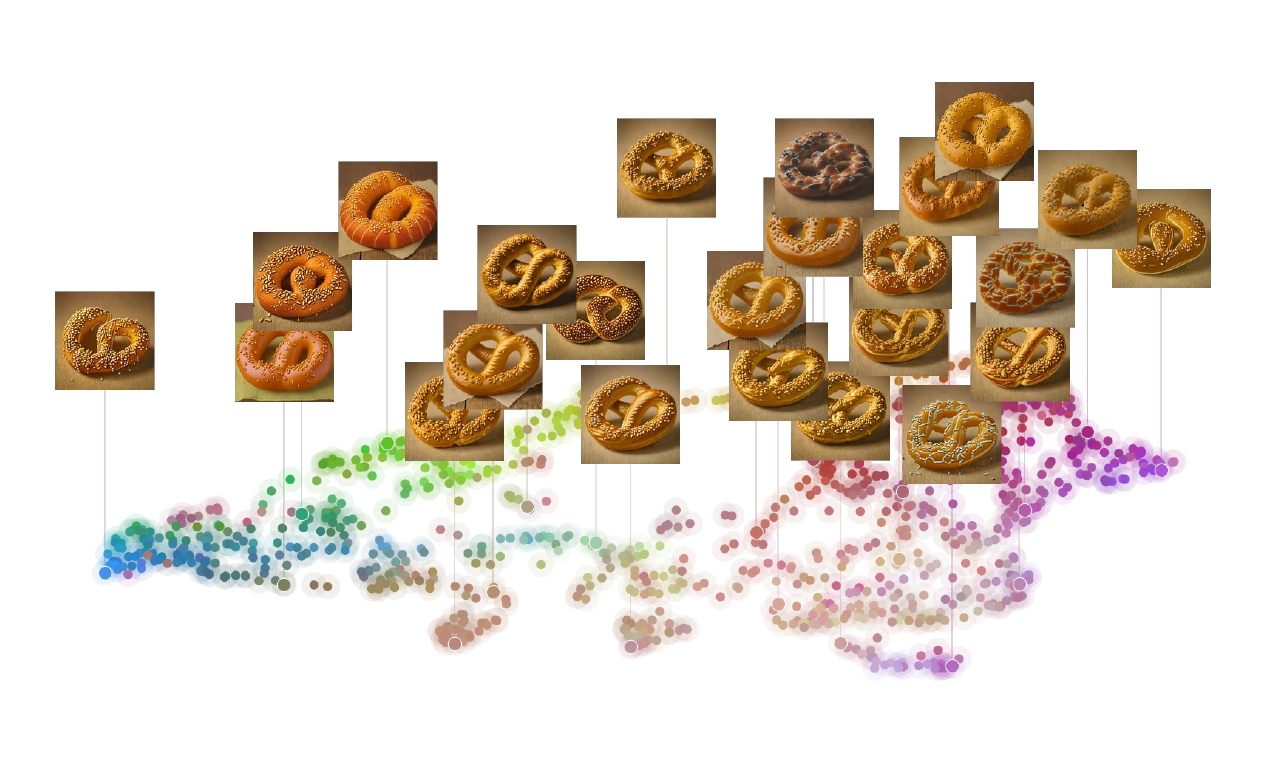}
    \caption{\textbf{Steering the pretzel manifold uncovered in SDXL by BSF.}
    We steer an SDXL block from the component that represents the concept of a pretzel.
    We prompt SDXL for an image of a pretzel and then run four steps of diffusion while fixing the pretzel block to a particular value.
    We visualize the points within a block with UMAP because the $\blocksize = 16$, but color the plot according to the principal components of the block subspace.
    Observe how the images produced by steering reveal the semantics of the pretzel block.
    }
    \label{fig:grid_manifold_steering}
   \vspace{-2mm}
\end{figure}

Up until this point, our analysis has been purely statistical.
However, if BSFs truly capture concepts, then moving within a block should manipulate the behavior according to the geometry of the concept within a block.
To evaluate this, we train a Grassmannian BSF on a diffusion model---specifically SDXL~\citep{podell2024sdxl}---and identify  blocks whose subspace governs an interpretable concept. Then we prompt SDXL to produce an image of that concept and steer within block to generate alternative images. See Figure~\ref{fig:grid_manifold_steering} for steering results on a pretzel manifold that steer to actually realized activations within a block.

In a follow up experiment, we steered along the manifolds within blocks in a more structured manner. 
Activations within a block do not fill a flat plane, but trace a curved low-dimensional manifold (Figure~\ref{fig:grid_steering}), so steering the concept is not a matter of moving along two fixed directions but of following that manifold.
This is uniquely captured by BSF, while the isolated directions of SAEs fail to do so. 
To traverse it in a controlled way we fit a Kohonen map~\citep{guthikonda2005kohonen} (inspired by \cite{konkle2021emergent,doshi2023cortical, yang2026vibe}) to the block manifold and intervene during sampling by setting the block's activations to each grid waypoint and generating one image per waypoint.
The path trace a smooth moves variation in the activations space and the resulting images are displayed in Figure~\ref{fig:grid_steering}. The in-distribution behavior can be computed, although is evident from our assumption (thanks to the group size the variation is large but stay on-manifold in the block subspace).
Put another way, an arbitrary direction in activation space could carry the generation off the manifold of natural images and degrade it, but movement confined to the block's subspace high density regions does not do so, instead maintaining semantic relevance.

\begin{figure}[t]
    \vspace{-12mm}
    \centering
    \includegraphics[width=0.96\linewidth]{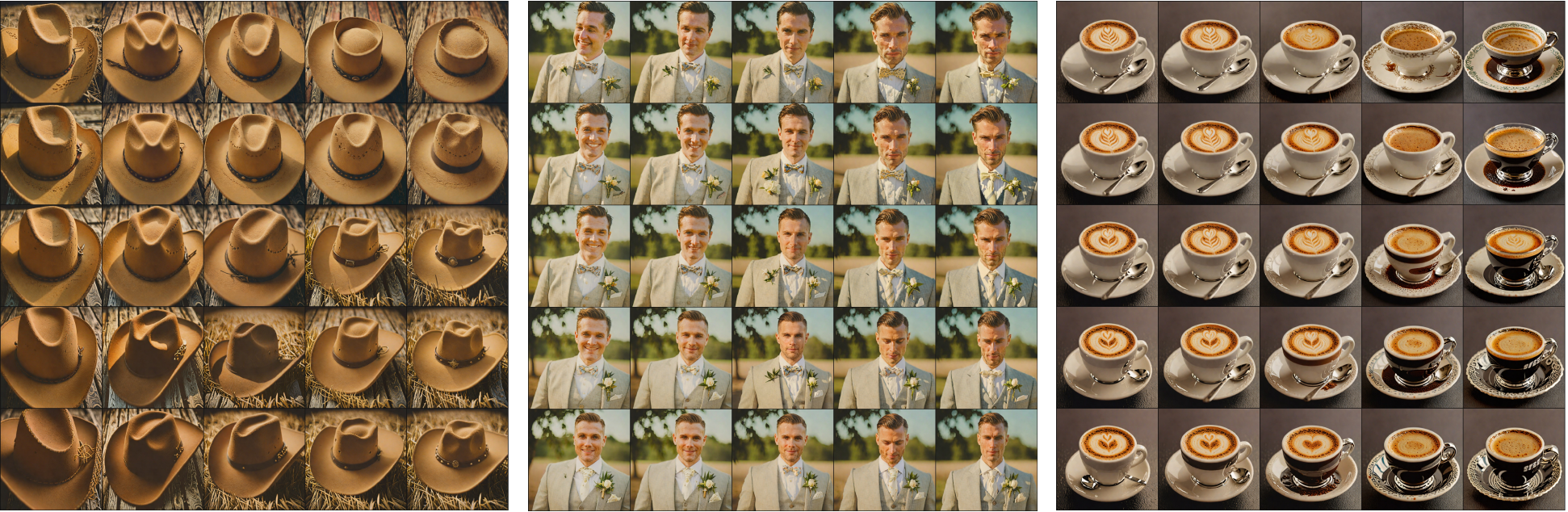}
    \caption{\textbf{Steering blocks in layer~\texttt{down.2.1} of SDXL.}
    Generations produced by setting the block's contribution and fitting a Kohonen map to fit the natural geometry of the 
    subspace (Figure~\ref{fig:grid_manifold_steering}). 
    To produce each grid of images, we prompt SDXL for an image of a hat/man/coffee and then steer across these grid coordinates within the block corresponding to a given concept. Observe that the grid axes aren't interpretable, because this would correspond to two concepts represented by two isolated directions. However, regions of the grid are similar, revealing geometric organization within a block.}
    \label{fig:grid_steering}
    \vspace{-3mm}
\end{figure}

\section{Discussion}
In this work, we posit a data-generating process for neural network representations as a mixture of manifolds and from it derive the principle of block sparsity. 
We then investigate three candidate instantiations of this principle and demonstrate the superiority of BSFs over SAEs, which learn concepts as single directions, across a variety of evaluations.

We can now return briefly to the visual-neuroscience motivation with which we began.
Where an SAE atom is a single direction, a block is a subspace---the distinction the visual cortex draws between simple and complex cells---and it clarifies what reading the block coordinate buys. Cortical models from the energy model~\citep{adelson1985spatiotemporal} to HMAX~\citep{riesenhuber1999hierarchical,serre2007robust} pool over a subspace to discard its internal coordinate and gain invariance: a complex cell signals that an oriented edge is present, not its phase. A block-sparse featurizer instead keeps and reads that coordinate, separating representing a factor from being invariant to it---and that the coordinate carries perceptual content is not in doubt, since visual hyperacuity recovers continuous quantities finer than the detector spacing from the graded activity of a population rather than any single unit~\citep{poggio1992fast}.
This is precisely what is needed if a feature is not a single direction but a low-dimensional manifold.

Useful as this lens is, the mixture of manifolds is a model and not the last word on neural network representations, and we expect that further progress, especially in other modalities, will call for featurizers whose atoms are something other than blocks of directions.
To be precise, we suspect that applying block sparsity directly to language or to video would mismatch the prior~\citep{bhalla2026temporal,lubana2025priors} in the same way a SAE mismatches a mixture of manifolds.
But this limitation is also the point. Blocks are not proposed here as the universal atom of representation, but as an example of what becomes possible when the featurizer is matched to the geometry of the representation it is meant to explain. Where the geometry is hierarchical, temporal, overlapping, or compositional, the matched object should change accordingly.

We therefore end where we began: with the claim that a featurizer is a hypothesis about a data-generating process. The role of interpretability is not only to decompose neural representations into atomic units, but to infer what the atomic units of representations must be. BSFs are one answer to that question and one instance of the broader program of \textit{Neural Geometry}~\citep{geiger2026world}: to let the phenomenology of neural representations tell us what the next answer should be.

\section*{Acknowledgments}

We are grateful to Demba Ba and William Dorrell for fruitful discussions on structured sparsity and dictionary learning that shaped the framing of this work, and to Binxu Wang for helpful discussions on the geometry large vision models. We thank our colleagues at Goodfire, Ren Makino, Emmanuel Ameisen, and the broader interpretability community for valuable feedback at various stages of the project. 

\bibliography{main, manifolds_refs}
\bibliographystyle{bibliostyle}

\newpage
\appendix

\section{Extended Related Work}

\paragraph{Vision interpretability.}
The earliest attempts to explain a vision model relied on attribution methods, that are functionals that given an input and a predictor assign to each input value a scalar that reports its importance to a prediction ~\citep{simonyan2013deep, zeiler2014visualizing, bach2015pixel, springenberg2014striving, smilkov2017smoothgrad, sundararajan2017axiomatic, Selvaraju_2019, Fong_2017,fel2021sobol,muzellec2023gradient}; a subsequent series of papers later established issues and unreliability of those methods \citep{adebayo2018sanity, ghorbani2017interpretation, fel2021cannot,slack2021counterfactual, sturmfels2020visualizing, hsieh2020evaluations, hase2021out,nguyen2021effectiveness}, notably that some of them can survive randomisation of the model weights or of the labels they claim to explain, which denies attribution any claim to report faithfully what the network computes. This motivated the turn to the concept as the unit of explanation, led by~\citet{kim2018interpretability} and developed through methods that recover a concept vocabulary by factorising activations rather than by inspecting individual coordinates, among them ACE \citep{ghorbani2019towards}, ICE \citep{zhang2021invertible}, CRAFT \citep{fel2023craft}, and the work of \citet{vielhaben2023multi}; these were later unified as instances of a single dictionary-learning problem \citep{fel2023holistic}. The sparse autoencoder soon supplied a way to learn such a vocabulary directly from activations under an explicit sparsity constraint \citep{cunningham2023sparse, bricken2023monosemanticity, gao2024scaling, rajamanoharan2024jumping, bussmann2024batchtopk,gorton2024missing}. Subsequent work includes Switch SAE that routes each activation to a single expert sub-dictionary through a learned gate and so trains a much wider dictionary at fixed compute \citep{mudide2025efficient}, while multi-expert variants co-activate several experts and rescale their features to undo the redundancy that single-expert routing induces \citep{xu2025beyond}. Closer to our concern, and as noted in the main paper, two efforts concurrent with ours, SMixAE \citep{francel2026smixae} and a theoretical treatment of subspace-aware coding for language models \citep{dalili2026subspace}, both applied to LLM, relax the assumption that a concept is carried by a single direction, and so point in the direction the present work develops.

\paragraph{Sparse subspace clustering and manifold learning.}
One might reasonably expect manifold learning to be the right apparatus here, and the literature is both large and geometrically sophisticated, offering methods tuned to a wide range of intrinsic geometries through global isometry \citep{tenenbaum2000global, silva2002global}, local linear reconstruction \citep{roweis2000nonlinear, vladymyrov2013locally}, spectral embedding of a neighbourhood graph \citep{belkin2001laplacian, coifman2006diffusion}, and curvature-aware patches \citep{donoho2003hessian, zhang2004principal}; the difficulty is that these methods are built to recover a single manifold, so that they collapse once the representation becomes additive in the sense of a Minkowski sum, recovering the joint manifold its summands trace out together while leaving the concept manifolds, the summands themselves, unrecovered. That a similarity-preserving objective should give rise to localized, manifold-tiling receptive fields is itself a known optimality result \citep{sengupta2018manifold}, which anticipates the tiling regime~\citep{bhalla2026sparse} identified, but for a single manifold rather than an additive mixture of several. Sparse subspace clustering \citep{elhamifar2013sparse} inherits an analogous limitation across an equally developed body of work \citep{liu2010robust, liu2012robust, soltanolkotabi2014robust, you2016scalable, li2017structured, abdolali2021beyond}, including its nonlinear extensions through locality preservation \citep{elhamifar2011sparse}, kernels \citep{patel2014kernel}, deep networks \citep{ji2017deep, li2022neural}, and matching pursuits \citep{tschannen2018noisy}, since its affinity construction presumes that each point belongs to exactly one subspace, so that a point lying in the sum of several subspaces breaks the affinity matrix, or at least alters what it can be taken to mean. The interpretability work of \citet{vielhaben2023multi} does not escape this either, and although it is to our knowledge the first to treat a concept as genuinely multi-dimensional, it does so without additivity, assigning each token to a single subspace rather than to a sum of several.
The same single-assignment limitation reappears in probabilistic form in the recent mixture-of-factor-analyzers decomposition of \citet{shafran2026directions} (and in the related SPADE \citep{hindupur2025projecting}), which adopts the multi-dimensional unit we also take. They model each concept as a low-rank subspace identified up to rotation and the activation set as a mixture of local Gaussian regions to which each token is assigned by a posterior. 
An additive composition is again read as one region rather than factored, and the joint set is covered by local affine charts while the summand manifolds are not targeted (as explained in \citep{bhalla2026sparse}, Appendix C.1).

\paragraph{Group sparsity and visual dictionaries.}
Closest to our own approach is a body of work on structured and group sparsity \citep{jenatton2010structured, bach2012structured, sun2014learning, theodosis2021convergence}, which organizes a code into groups that activate or not together and thereby furnishes the principle on which our featurizer rests. A second and older source lies in visual neuroscience, where learning a dictionary of visual primitives from natural-image statistics has long been standard \citep{olshausen1996emergence, olshausen1997sparse, foldiak2008sparse, serre2006learning}, and where organizing a flat sparse code into blocks recovers the idea that a visual feature is borne by a coordinated group of units rather than by an isolated one, much as a continuous variable is read out from a population of overlapping receptive fields rather than from any single unit \citep{pouget2000information, o1971hippocampus, georgopoulos1986neuronal}. More precisely, the natural neural analogue of a block---a subspace whose activation is the norm of its code---is the complex cell: independent subspace analysis learns exactly such subspaces from natural-image statistics, pooling a group of filters through their energy and thereby acquiring the phase invariance of complex cells~\citep{hyvarinen2000emergence,hyvarinen2001topographic}, a learned form of the classical energy model~\citep{adelson1985spatiotemporal} and later extended to higher-order image structure~\citep{karklin2009emergence,cadieu2012learning}. Where these models pool the within-subspace direction away to obtain invariance, our featurizer retains it, reading the block coordinate as the concept's internal geometry. The same organization reappears at the top of the ventral stream: inferotemporal cortex is arranged into feature columns whose cells share an object feature but vary along an internal dimension, with objects represented as combinations of active columns~\citep{tanaka1991coding,tanaka1996inferotemporal}. Recent work recasts this as a low-dimensional object space whose axes are tuned one at a time~\citep{bao2020map,chang2017code}---a group whose norm signals a feature and whose coordinate parametrizes it, the same structure our block-sparse featurizer recovers in the high-level representations of DINOv3.Taken together, these two lineages converge on block sparsity as the structural primitive. 

\section{Feature Visualization}
\label{sec:feature_visualization}

We visualize recovered concept manifolds with activation maximization~\citep{nguyen2016multifaceted,olah2017feature,nguyen2019understanding}, using specifically the MACO from~\citep{fel2023unlocking} to synthesize images targeted at selected locations of a block manifold. 
For a block $g$, we sample points along paths through the empirical cloud of block contributions and optimize images whose activations align with those targets. 
Figure~\ref{fig:feature_viz_cauliflower} shows the procedure on a single cabbage/cauliflower block: moving across the recovered manifold changes the optimized image to from leafy cabbage to floret. 
Figure~\ref{fig:feature_viz_manifold} extends the same readout to a broader gallery of blocks, showing that the internal coordinates recovered by BSFs expose smooth within-concept variation rather than only whether a concept is present.

\begin{figure}[ht]
    \centering
    \includegraphics[width=0.8\linewidth]{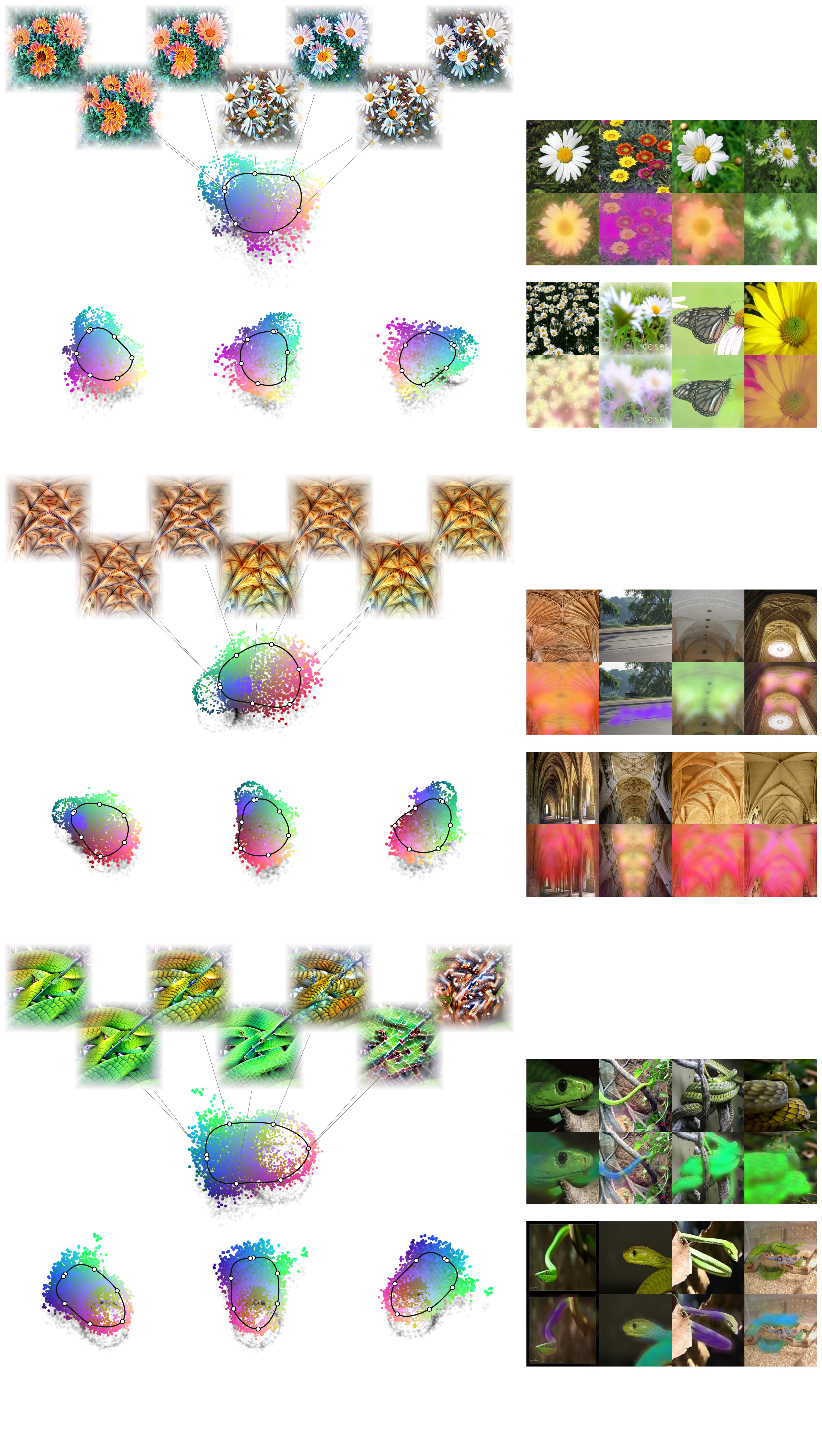}
    \caption{\textbf{A gallery of feature visualizations on recovered concept manifolds.}
    Representative blocks from a Grassmannian BSF trained on DINOv3, visualized with the same convention as Figure~\ref{fig:feature_viz_cauliflower}. 
    For each block, the colored point cloud projects the block's active contributions, with hue encoding the first three principal coordinates of its intrinsic geometry. 
    Feature visualizations sampled along paths through this geometry, shown above each cloud, change smoothly within a coherent visual concept, while the natural-image montages on the right ground the same regions in real activations. 
    The examples span animal fur and markings, boats, cords, bedding, foliage, branches, and repeated manufactured textures, illustrating that a block does not merely detect whether a concept is present, but also provides coordinates for variation within the concept manifold.}
    \label{fig:feature_viz_manifold}
\end{figure}

\begin{figure}[ht]
    \centering
    \includegraphics[width=0.8\linewidth]{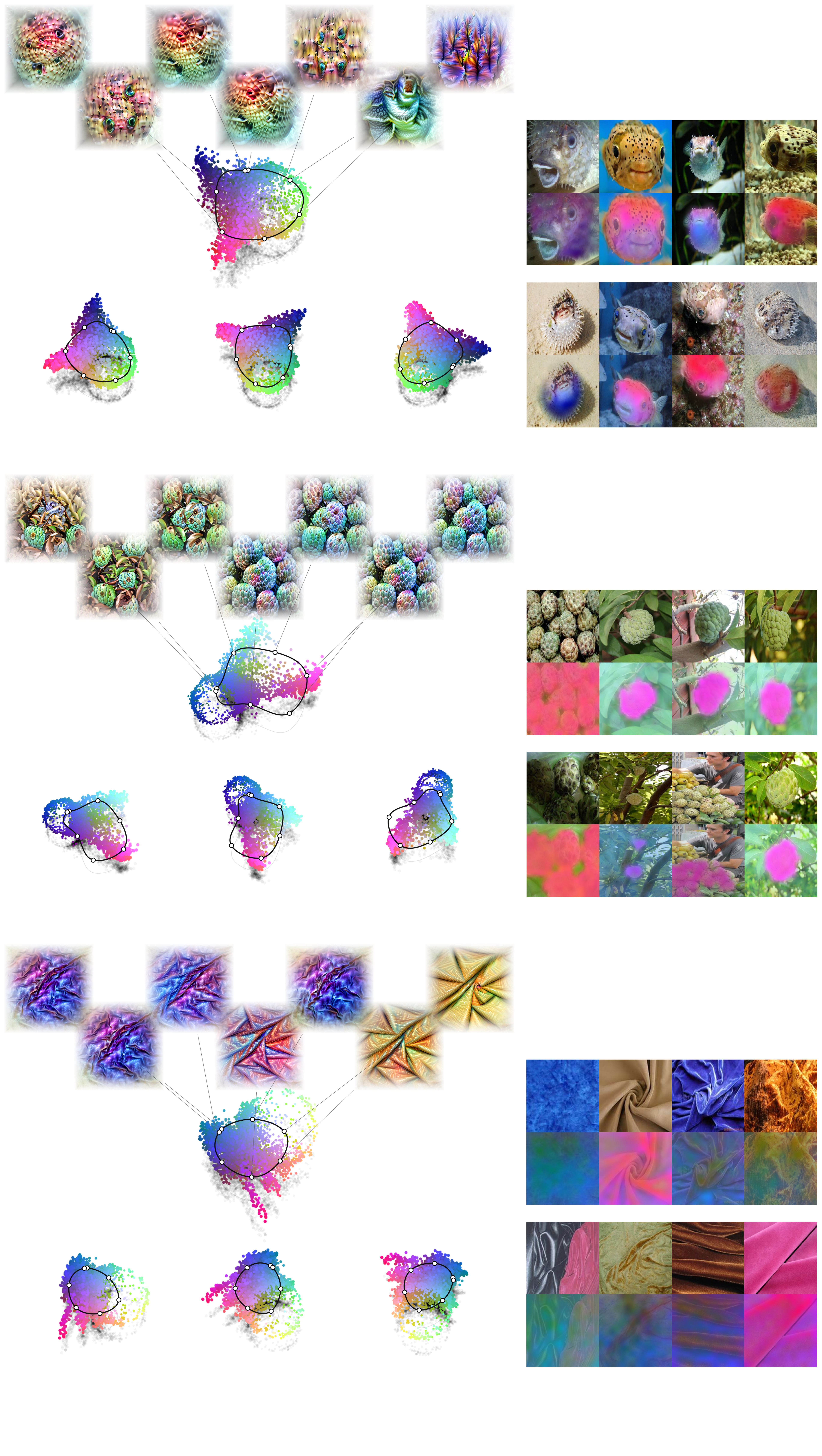}
    \caption{\textbf{A gallery of feature visualizations on recovered concept manifolds.}
    Follow up of Fig.\ref{fig:feature_viz_manifold}.}
    \label{fig:feature_viz_manifold2}
\end{figure}

\clearpage

\section{Minimum Description Length and Feature Dimensionality: Protocol and Full Results}
\label{app:mdl}

This appendix details the controlled sweep behind Section~\ref{subsec:mdl}, the exact estimator of the description length $L_\delta(\bm{x})$ of Eq.~\ref{eq:mdl}, and the per-feature dimensionality study that follows it. Every quantity is read from trained checkpoints and held-out activations, and every figure is regenerated by a committed script with no hand-entered numbers.

\subsection{Sweep and training protocol}

\begin{figure}[t]
    \centering
    \includegraphics[width=\linewidth]{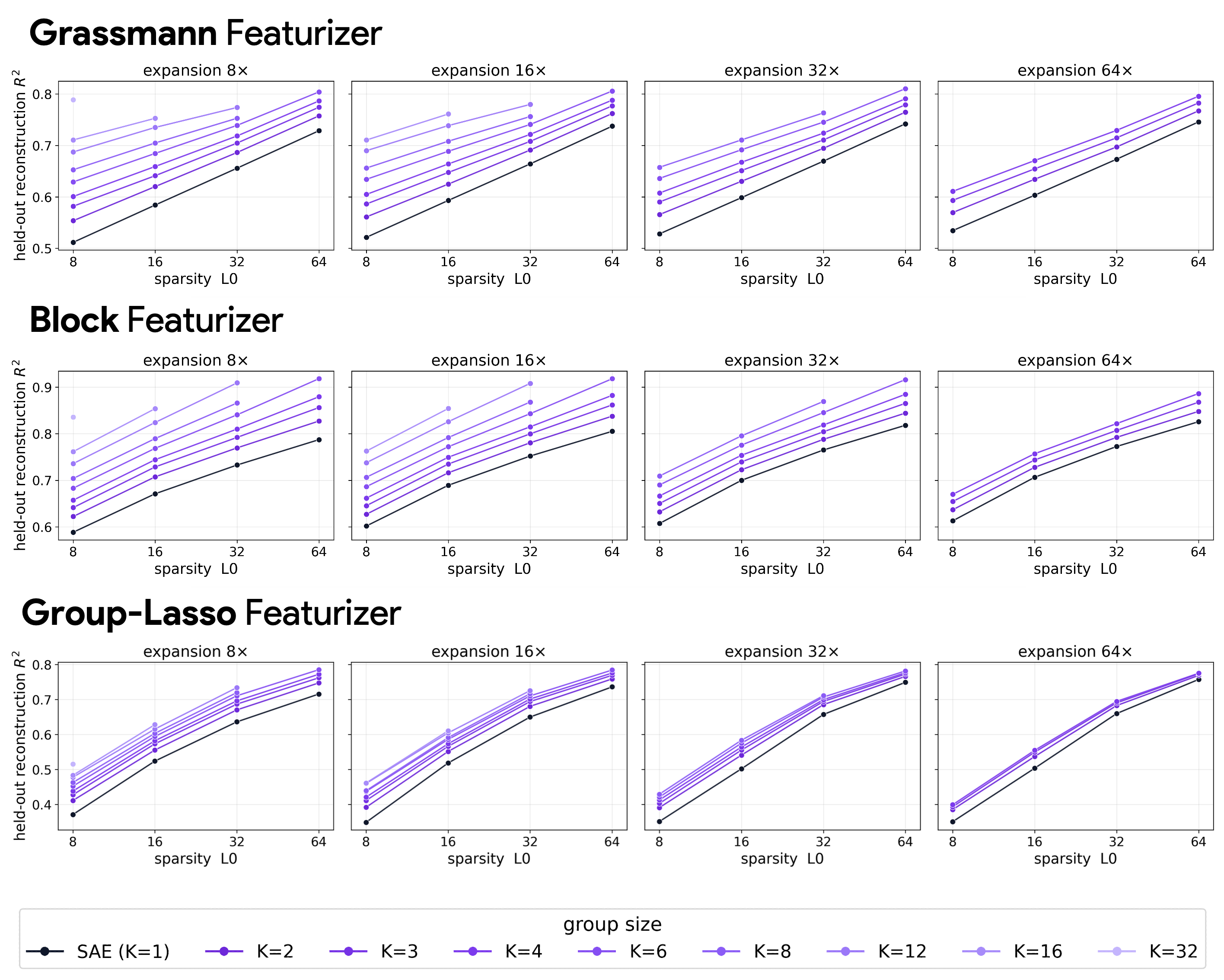}
    \caption{\textbf{Reconstruction improves monotonically with every structural parameter, for all three featurizers.} Held-out $R^2$ against block sparsity $\ell$, one row per featurizer (top to bottom: Block, Grassmannian, Group~Lasso), one column per dictionary width $G$ (expansions $8\times$ to $64\times$), with curves colored by block dimension $k$ ($k{=}1$, in black, is the directional SAE). Quality rises with $\ell$, with $G$, and with $k$ throughout, and the three parameters do not substitute for one another. The advantage of structure over the $k{=}1$ baseline is largest at narrow dictionaries and contracts as $G$ widens, since a wider dictionary can already cover a factor with several small blocks. Because reconstruction is monotone in every direction, it cannot by itself rank the featurizers, which motivates the description-length criterion of Section~\ref{subsec:mdl}.}
    \label{fig:pareto_full}
\end{figure}

We train the three featurizers of Eq.~\ref{eq:bsf} on final-layer DINOv3 ViT-B patch activations ($d = 768$) over a common grid: block dimension $k \in \{1,2,3,4,6,8,12,16,32\}$, dictionary width $G \in \{4096, 8192, 16384, 32768\}$ (expansions $8\times$ to $64\times$), and block sparsity $\ell = \|\bm{z}\|_{2,0} \in \{8,16,32,64\}$. A cell is pruned when it exceeds $Gk > 1.6 \times 10^5$ atoms or $\ell k > 400$ active coordinates, which leaves $96$ realized configurations per featurizer and $288$ in all. The prune keeps every configuration well clear of degeneracy, since $\max \ell k = 384 < d = 768$. All runs share one optimizer schedule, a cosine decay of the learning rate from $10^{-4}$ to $10^{-5}$ with a $2000$-step warmup, three epochs over the activation shards at batch size $8192$, and the input normalization that places $\|\bm{x}\| \approx \sqrt{d}$. 

\subsection{Estimating the description length}

The three terms of Eq.~\ref{eq:mdl} that are not combinatorial rest on three covariance spectra, each harvested per configuration from the featurizer's own forward pass. On a held-out shard, disjoint from the training set, we take the eigenvalues $\lambda_a$ of the covariance of the residual $\bm{r} = \bm{x} - \bm{z}\bm{D}$ together with the held-out $R^2$. On a separate pool of in-distribution activations we take, for each block $g$, its firing rate $p_g$ and the eigenvalues $\sigma^2_{gj}$ of the covariance of its code $\bm{z}_g$ conditional on the block being active. The codes are the featurizer's real gated outputs in each case, the result of $\bm{\Pi}_\ell$ for the Grassmannian and Block featurizers and of the block soft-threshold for the Group~Lasso featurizer, so that for the latter the support size entering the combinatorial term is the measured mean number of active blocks per token rather than a prescribed target.

At a distortion $\delta$, the code and residual terms are the rate-distortion code lengths of these spectra, water-filled against the distortion floor, and the dictionary term amortizes one Grassmann chart over the $N$ tokens it serves. Writing $(\cdot)_+ = \max(\cdot, 0)$,
\begin{equation}
\label{eq:mdl-terms}
L_\delta(\bm{z}) = \sum_{g} p_g \sum_{j=1}^{k} \tfrac12\!\left(\log_2 \tfrac{\sigma^2_{gj}}{\delta}\right)_{\!+},
\qquad
L_\delta(\bm{r}) = \sum_{a} \tfrac12\!\left(\log_2 \tfrac{\lambda_a}{\delta}\right)_{\!+},
\qquad
\tfrac1N L(\bm{D}) = \tfrac12 \tfrac{G\,k\,(d-k)}{N} \log_2 N,
\end{equation}
the dictionary count following from the $k(d-k)$ free parameters of a point on the Grassmannian $\mathrm{Gr}(k, d)$. The support term is the exact combinatorial $\log_2\binom{G}{\ell}$, and the total is $L_\delta(\bm{x}) = \log_2\binom{G}{\ell} + L_\delta(\bm{z}) + L_\delta(\bm{r}) + \tfrac1N L(\bm{D})$ of Eq.~\ref{eq:mdl}.

Distortion is measured relative to a reference variance common to every configuration, $\delta = \mathrm{frac} \cdot \mathrm{var}_{\mathrm{ref}}$ with $\mathrm{frac} = 1 - R^2_{\mathrm{target}}$, so that the four headline levels $\mathrm{frac} \in \{0.01, 0.05, 0.10, 0.20\}$ correspond to $R^2_{\mathrm{target}} \in \{0.99, 0.95, 0.90, 0.80\}$. The $20\%$ level coincides with the DINOv3 noise floor estimated in Figure~\ref{fig:pareto}, and is the level at which the main text reads the result. We evaluate each water-filling term with the smooth Gaussian rate $\tfrac12 \log_2(1 + \sigma^2/\delta)$, which agrees with the clipped form wherever $\sigma^2 \gg \delta$ but removes the discontinuity an eigenvalue would otherwise introduce as it crosses $\delta$, leaving the description-optimal block dimension steadier as $\delta$ is varied.

\begin{figure}[h]
    \centering
    \includegraphics[width=0.9\linewidth]{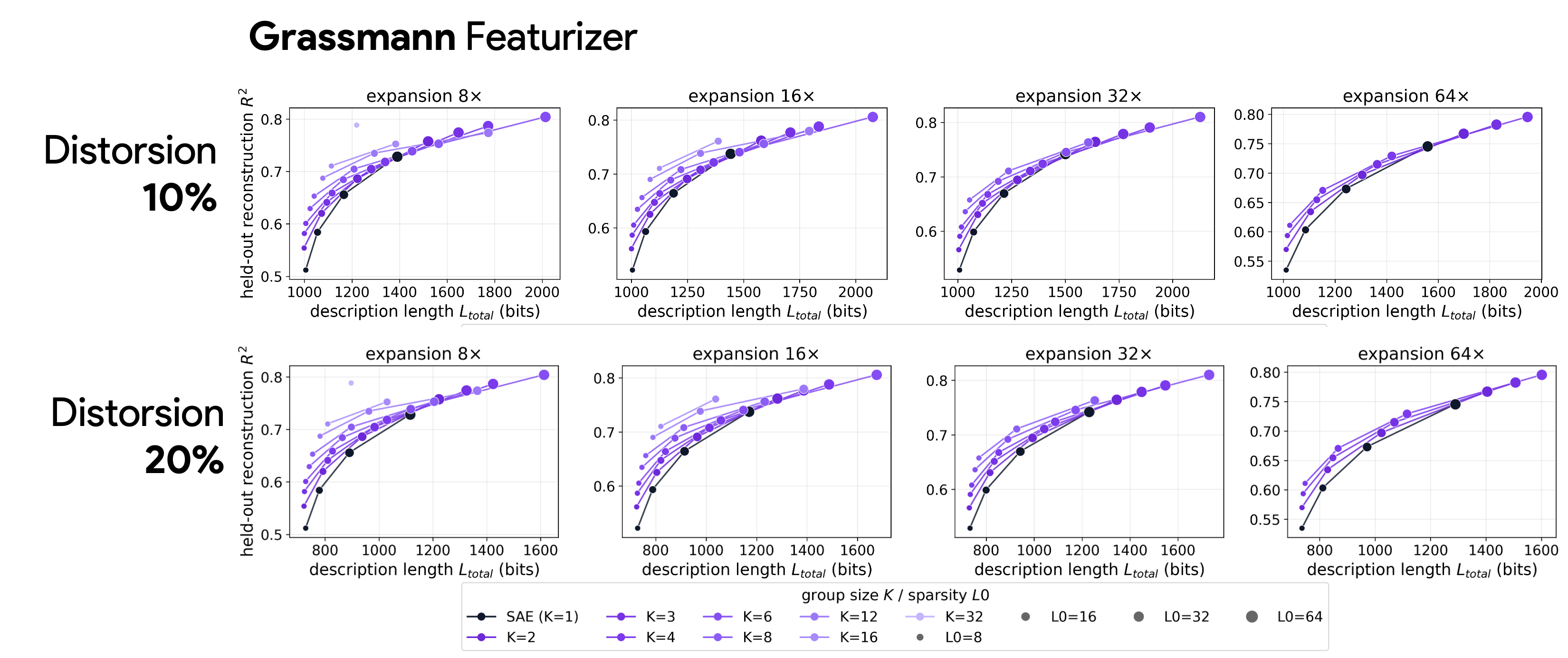}
    \caption{\textbf{Grassmannian featurizer, full description-length landscape.} Description length $L_\delta(\bm{x})$ (Eq.~\ref{eq:mdl}) against block dimension $k$, with one panel per dictionary width $G$ and one curve per block sparsity $\ell$, at each distortion level. The minimum of each curve marks the description-optimal block dimension, which sits at an interior $k \approx 3$ and eases downward as $G$ widens.}
    \label{app:mdl:grassmann}
\end{figure}

\begin{figure}[h]
    \centering
    \includegraphics[width=0.9\linewidth]{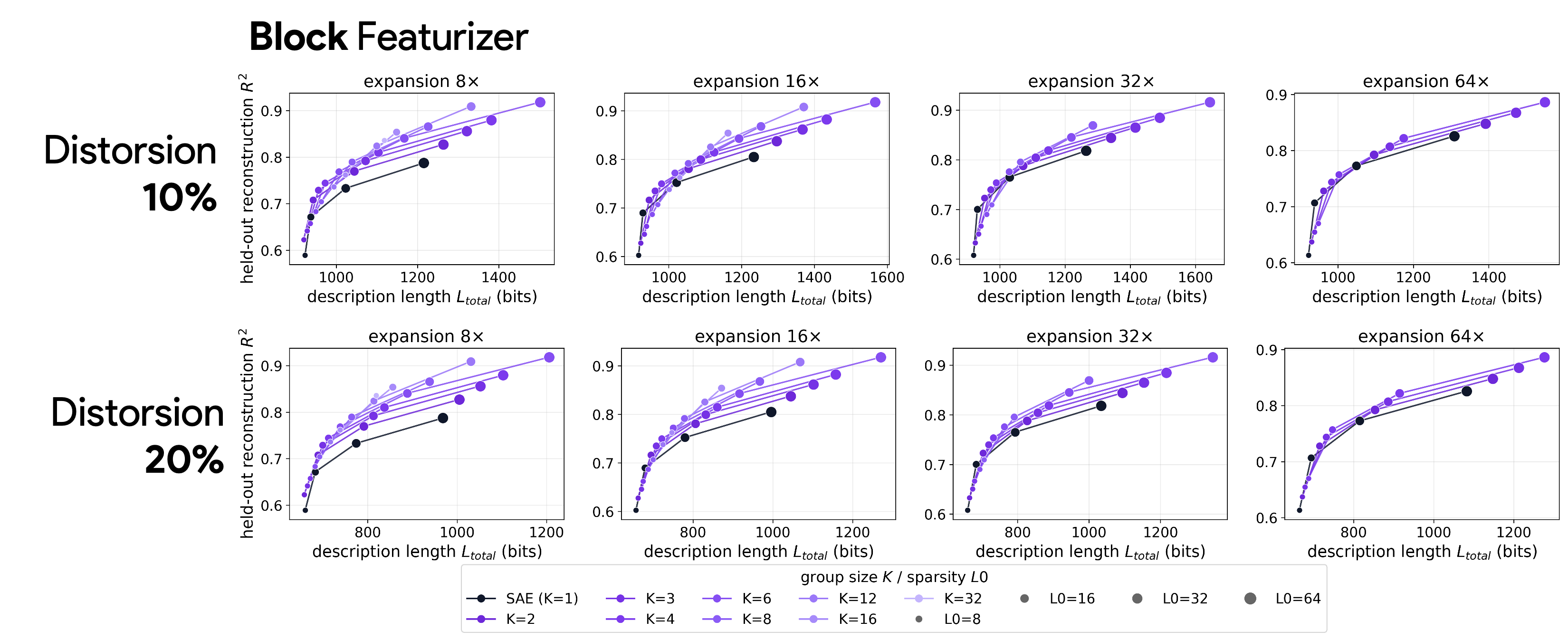}
    \caption{\textbf{Block featurizer, full description-length landscape.} As in Figure~\ref{app:mdl:grassmann}, for the free-decoder Block featurizer. The structured codes again describe activations in fewer bits than the $k{=}1$ SAE, though the optimum sits nearer the directional limit, $k \in \{1,2,3\}$, since the untied dictionary already distributes structure across atoms.}
    \label{app:mdl:block}
\end{figure}

\begin{figure}[h]
    \centering
    \includegraphics[width=0.9\linewidth]{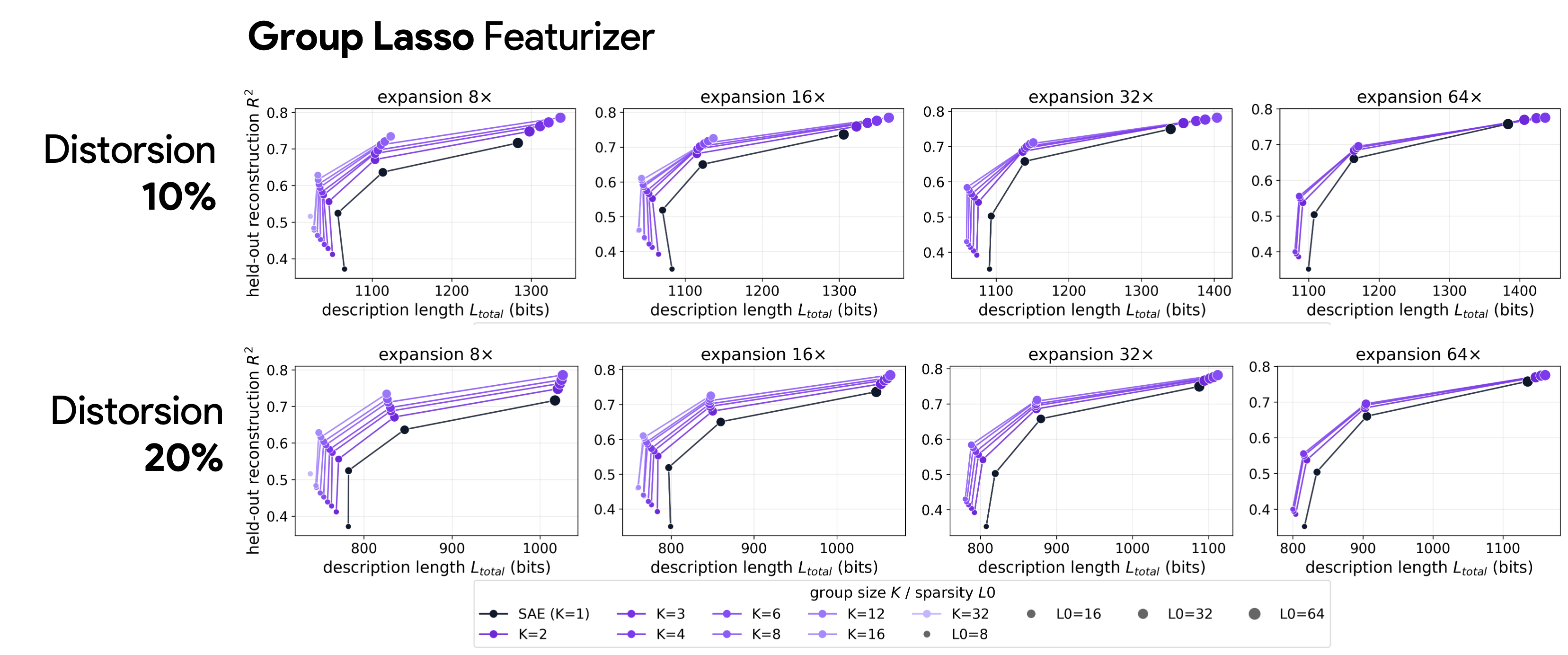}
    \caption{\textbf{Group~Lasso featurizer, full description-length landscape.} As in Figure~\ref{app:mdl:grassmann}, for the Group~Lasso featurizer, compared at matched average $\|\bm{z}\|_{2,0}$ since its sparsity is not fixed in advance. It saves the most support bits and tolerates the largest blocks, yet realizes them at a higher residual cost, in keeping with the low per-block utilization documented in Figure~\ref{fig:rank_appendix}.}
    \label{app:mdl:glasso}
\end{figure}

\subsection{Description length favours structure at every distortion}

At the $20\%$ noise floor the Grassmannian featurizer attains its shortest description at a block dimension between $2$ and $4$, as reported in the main text (Figure~\ref{fig:mdl_grassmann}). The conclusion holds across the other featurizers and the remaining distortion levels: the structured codes describe DINOv3 activations in fewer bits than the directional code to which they reduce at $k{=}1$, and the description-optimal block dimension is moderate and eases downward as the dictionary widens. The featurizers differ in where the optimum falls. The Grassmannian featurizer, whose orthonormal charts render every block dimension fully effective, shows the clearest interior optimum near $k \approx 3$. The Block featurizer, its untied dictionary already free to spread structure across atoms, settles nearer the directional limit at $k \in \{1,2,3\}$. The Group~Lasso featurizer tolerates the largest blocks but realizes them at a higher residual cost, in keeping with the lower per-block dimensionality documented below. As these optima are read from shallow minima on a coarse grid, we take the direction of the effect, and not any single value of $k$, to be the reliable conclusion.

\subsection{Feature dimensionality: how much of a block is used}

The description-length analysis selects a block dimension for the dictionary as a whole, and the per-feature study asks a complementary question: of the $k$ dimensions a featurizer grants each block, how many does a feature actually occupy? We measure this with three rank statistics of the per-block code covariance, the stable rank $\|\cdot\|_F^2 / \|\cdot\|_2^2$, the participation ratio, and the effective rank (the exponential of the spectral entropy), each equal to $1$ for a block that carries a single direction and approaching $k$ for a block whose $k$ dimensions are used evenly. We summarize occupancy by the utilization, the effective rank divided by the allotted $k$, which is $1$ for a fully used block and falls toward $0$ as the block leaves dimensions idle.

\begin{figure}[t]
    \centering
    \includegraphics[width=\linewidth]{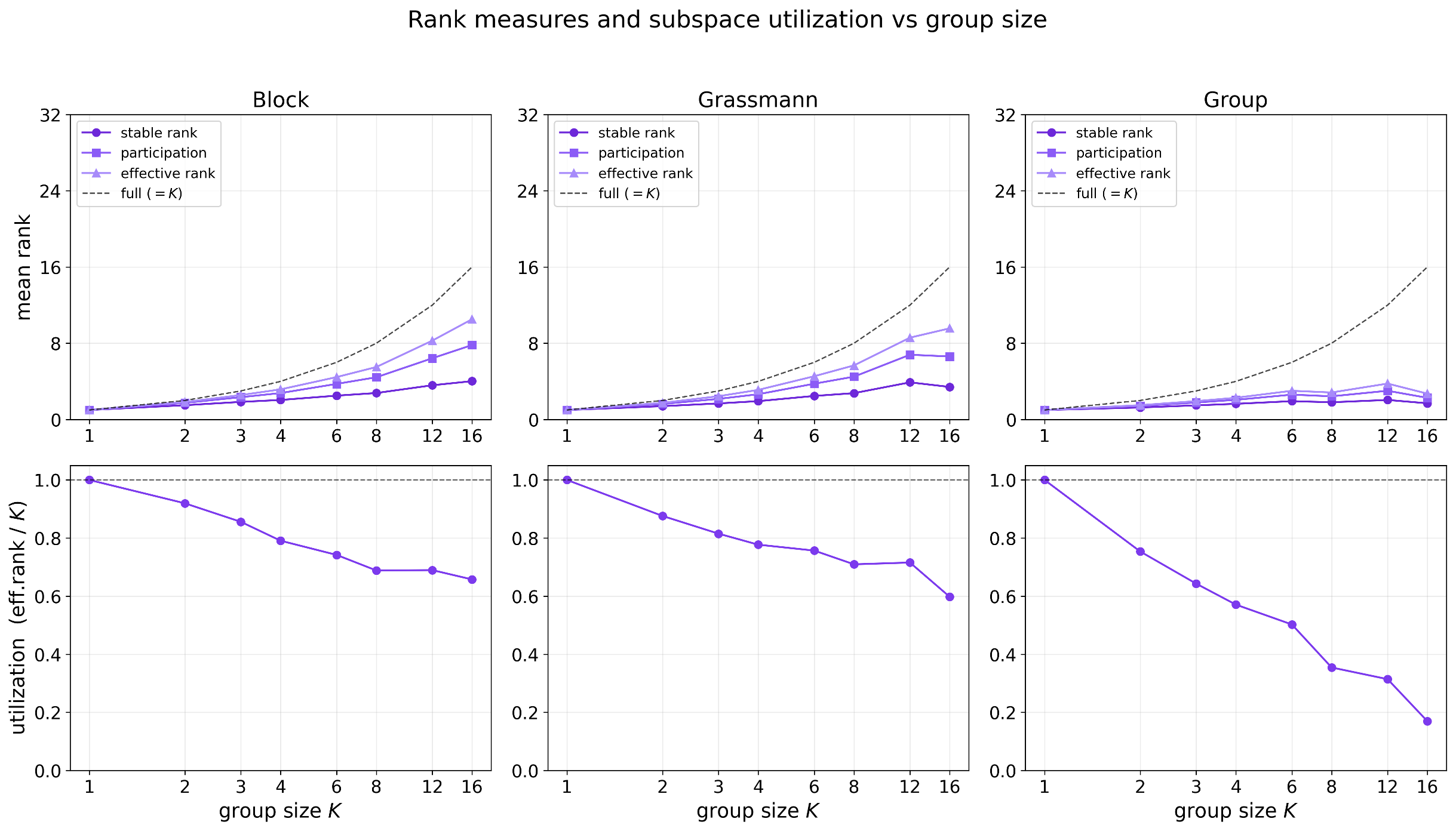}
    \caption{\textbf{Rank measures and utilization against block dimension.} \textbf{(top)} Mean stable rank, participation ratio, and effective rank of the per-block code, against the allotted block dimension $k$, for the three featurizers; the dashed line is full occupancy ($\mathrm{rank} = k$). All three measures grow far more slowly than the diagonal, so a block uses only a fraction of the dimensions it is given. \textbf{(bottom)} Utilization, the effective rank as a fraction of $k$, against $k$. Utilization falls from $1$ throughout, mildly for the Block and Grassmannian featurizers (to roughly $0.6$--$0.66$ at $k{=}16$) and steeply for the Group~Lasso featurizer (to about $0.17$ at $k{=}16$), whose convex shrinkage concentrates each block on a few directions.}
    \label{fig:rank_appendix}
\end{figure}

Figure~\ref{fig:rank_appendix} reports the three measures and the utilization across the grid. All three ranks grow with $k$ far more slowly than the full-rank diagonal, so a block uses only part of the dimensions it is allotted, and the gap widens as $k$ grows. The Block and Grassmannian featurizers keep utilization high, falling from $1$ to roughly two-thirds at $k{=}16$, consistent with blocks that genuinely fill a low-dimensional subspace. The Group~Lasso featurizer is different: its utilization collapses toward $0.17$ at $k{=}16$, so the blocks it nominally widens are in fact spent on a handful of directions, which is the per-feature counterpart of the higher residual cost it pays in the description-length analysis.

Read together, the description-length and per-feature analyses give one picture. Structure shortens the description of DINOv3 activations at a moderate block dimension, and the dimension a feature actually occupies is moderate for the same reason, with the average stable rank settling between two and four across featurizers (Figure~\ref{fig:stable_rank}) even when blocks are allotted up to $k{=}16$. The block dimension that the dictionary prefers and the dimension that a feature fills are two readings of the same fact, that DINOv3 features are on average roughly three-dimensional.

\section{Implementation of the Block-Sparse Featurizers}
\label{app:implementations}

All three featurizers share the same skeleton: a linear encoder that produce group of signed codes (positive or negative, no more \texttt{ReLU} involved) and a linear decoder. Activations are scaled such that $\mathbb{E}\big(||\bm{x}||_2\big) = \sqrt{d}$ before encoding, the decoder is linear, $\hat{\bm{x}} = \bm{z}\bm{D}$ with $\bm{D} \in \mathbb{R}^{Gk \times d}$ stacking the blocks $\bm{D}_g \in \mathbb{R}^{k \times d}$, and training minimizes reconstruction error over the dataset with Adam. They differ in the encoder, the constraint set, and the loss, which we now give per method.

\paragraph{Grassmannian BSF.}
The objective is reconstruction with orthonormal charts and a tied encoder,
\begin{equation*}
\min_{\bm{D}, \gamma} \; \mathbb{E}\,\|\bm{x} - \bm{z}\bm{D}\|_2^2
\quad \text{s.t.} \quad \bm{D}_g \bm{D}_g^\top = \bm{I}_k \;\; \forall g.
\end{equation*}
Its parameters are the charts $\bm{D}_g$ and a single positive scalar $\gamma$, shared across blocks. The code is built block-wise,
\begin{equation*}
\bm{z}_g = \gamma \, \bm{x} \bm{D}_g^\top \in \mathbb{R}^k,
\qquad
\bm{z} = \bm{\Pi}_\ell\big(\bm{z}_1, \dots, \bm{z}_G\big),
\end{equation*}
where $\bm{\Pi}_\ell$ retains $\bm{z}_g$ if $g$ is among the $\ell$ blocks of largest norm and zeroes it otherwise. The loss is pure reconstruction, sparsity being enforced by $\bm{\Pi}_\ell$ and the constraint by re-projecting each $\bm{D}_g$ onto the Stiefel manifold via QR decomposition. In practice we found the re-projection need not occur at every step: blocks drift slowly from orthonormality, and projecting every 20 steps suffices. The scalar $\gamma$ compensates the energy lost by tying encoder and decoder; we parameterize it through its logarithm to keep it positive and initialize it so reconstruction of the average activation is unbiased at step zero. Selection is per-sample; batch-level variants in the spirit of BatchTopK~\citep{bussmann2024batchtopk} are a natural extension we leave to future work.

\paragraph{Vanilla BSF.}
Both maps are now free. The parameters are an encoder $(\bm{W}, \bm{b})$ with $\bm{W} \in \mathbb{R}^{d \times Gk}$ and the dictionary $\bm{D}$, with each decoder block constrained to the unit ball, $\|\bm{D}_g\|_F \leq 1$, and the objective is reconstruction under the code constraint $\|\bm{z}\|_{2,0} \leq \ell$. The code is
\begin{equation*}
\bm{z}_g = (\bm{x}\bm{W} + \bm{b})_g \in \mathbb{R}^k,
\qquad
\bm{z} = \bm{\Pi}_\ell\big(\bm{z}_1, \dots, \bm{z}_G\big).
\end{equation*}
The loss is again pure reconstruction. The ball constraint resolves the scale ambiguity between $\bm{z}$ and $\bm{D}$, which would otherwise let block norms drift and corrupt the top-$\ell$ selection. We initialize $\bm{W} = \bm{D}^\top$ with the encoder scaled so pre-code block norms have the correct order of magnitude at initialization, which removes most dead blocks without auxiliary losses.

\paragraph{Group Lasso BSF.}
The hard constraint is replaced by its convex surrogate. The parameters are $(\bm{W}, \bm{b}, \bm{\theta}, \bm{D})$ with one learned threshold $\theta_g > 0$ per block, and the code applies the block shrinkage,
\begin{equation*}
\bm{z}_g = \operatorname{sh}_{\theta_g}\big((\bm{x}\bm{W} + \bm{b})_g\big),
\qquad
\bm{z} = (\bm{z}_1, \dots, \bm{z}_G),
\end{equation*}
where $\operatorname{sh}_{\theta}(\bm{u}) = \big(1 - \tfrac{\theta}{\|\bm{u}\|_2}\big)_{+} \bm{u}$ is the proximal operator of the $\ell_{2,1}$ norm, so a forward pass is one step of proximal gradient on the relaxed objective~\citep{gregor2010learning}. The featurizer specific loss is
\begin{equation*}
\mathbb{E}\,\|\bm{x} - \bm{z}\bm{D}\|_2^2 + \lambda \|\bm{z}\|_{2,1}.
\end{equation*}
In practice, to target a point on the sparsity--reconstruction Pareto front, we apply the $\ell_{2,1}$ term only when the block sparsity exceeds a target threshold; this gives better control over the resulting $\|\bm{z}\|_{2,0}$ than tuning $\lambda$ alone, though we regard the schedule as a practical device and a more principled treatment as future work. Since the resulting sparsity is not fixed in advance, comparisons against the other two featurizers are made at matched average $\|\bm{z}\|_{2,0}$.

\paragraph{Losses.}
All three featurizers minimize the same objective,
\begin{equation*}
\mathcal{L} = \underbrace{\mathbb{E}\,\|\bm{x} - \bm{z}\bm{D}\|_2^2}_{\text{reconstruction}}
\;+\; \underbrace{\lambda\,\mathbb{E}\,\|\bm{z}\|_{2,1}}_{\text{Group Lasso only}}
\;+\; \underbrace{\alpha\,\mathbb{E}\,\|\bm{r} - \bm{z}'\bm{D}\|_2^2}_{\text{AuxK}},
\end{equation*}
where $\lambda = 0$ for the Grassmannian and Block BSFs. The last term is the block analogue of the auxiliary loss of~\citet{gao2024scaling}: $\bm{r} = \bm{x} - \bm{z}\bm{D}$ is the residual and $\bm{z}' = \bm{\Pi}_\ell(\bm{z}_1, \dots, \bm{z}_G)$ is the code built from the next $\ell$ blocks by norm among those not selected, so the runner-up blocks are asked to explain what the selected ones missed. %
We use $\alpha = 1/\ell$ throughout.

\section{Revisiting InceptionV1 Curves.}
\label{app:inception}

A Fourier probe of a $K{=}16$ subspace could in principle report power at modes
$k\!\ge\!2$ that arises from noise, or from the subspace and the featurizer,
rather than from the signal, since a Fourier decomposition is a complete
orthogonal basis that reconstructs any response on the circle exactly given
enough components, so that power at the higher modes is something the basis can
always supply and is not on its own evidence of periodic structure in the
signal. This raises the question of how the modes the probe recovers can be
shown to be real rather than artifacts of the basis: they are real in one sense
already, in that the probe isolates directions a downstream layer or task could
read off the representation, though legibility of this kind does not settle the
matter, since a recovered direction counts only when the variance it explains,
the norm of the response once projected onto it, is quantitatively large rather
than negligible. We rule the artifact reading out with a matched null
(Table~\ref{tab:fourier_null}).

Writing $\bm{m}(\theta)$ for the recovered curve group's reconstruction
contribution as the stimulus orientation $\theta$ sweeps $[0,360^\circ)$, we split
the group's input response into its first harmonic (the orientation circle,
$k{=}1$) and a residual, replace that residual with white-in-$\theta$ Gaussian
noise of equal off-circle energy, and pass this orientation-plus-noise input
through the same trained group and identical probe. The group is active at every
orientation (coverage $1.0$), so its gated code is a linear function of its input
and cannot create harmonics by gating: a noiseless $k{=}1$ input returns a response
that is purely first harmonic, and matched noise ($56\%$ of the input variance)
reaches only the analytic white-noise floor $1/(n_\theta/2)=0.06\%$ per mode at
$n_\theta{=}1800$. The measured manifold instead places $18.3\%$ and $12.0\%$ of
its variance at $k{=}2$ and $k{=}3$, exceeding the null by factors of $470$ and
$314$ ($z\approx1.5\times10^{3}$ and $9\times10^{2}$ over $200$ draws). The second
and third harmonics are therefore genuine periodic structure in the curve
response, inherited from the network, whose raw response carries even more of it,
and not an artifact of noise, the subspace, or the SAE.
\begin{table}[h]
\centering
\small
\caption{\textbf{Null test for the higher harmonics.} Percentage of AC variance of
$\bm{m}(\theta)$ in each Fourier mode $k$. A pure first-harmonic input, and a
first-harmonic input plus white-in-$\theta$ noise carrying the same off-circle
energy, are passed through the same $K{=}16$ curve group. Both nulls leave
$k\!\ge\!2$ at the white-noise floor; the measured manifold exceeds it by two
orders of magnitude, so modes $k{=}2,3$ are real structure inherited from the
network.}
\label{tab:fourier_null}
\begin{tabular}{lcccc}
\toprule
 & $k{=}1$ & $k{=}2$ & $k{=}3$ & $k{=}4$ \\
\midrule
raw curve response                          & $43.6$  & $25.9$ & $16.8$ & $7.4$ \\
recovered manifold $\bm{m}(\theta)$          & $58.9$  & $18.3$ & $12.0$ & $7.2$ \\
null: orientation only ($k{=}1$, no noise)   & $100.0$ & $0.0$  & $0.0$  & $0.0$ \\
null: orientation $+$ matched noise          & $65.6$  & $0.04$ & $0.04$ & $0.04$ \\
\bottomrule
\end{tabular}
\end{table}

\begin{figure}
    \centering
    \includegraphics[width=0.85\linewidth]{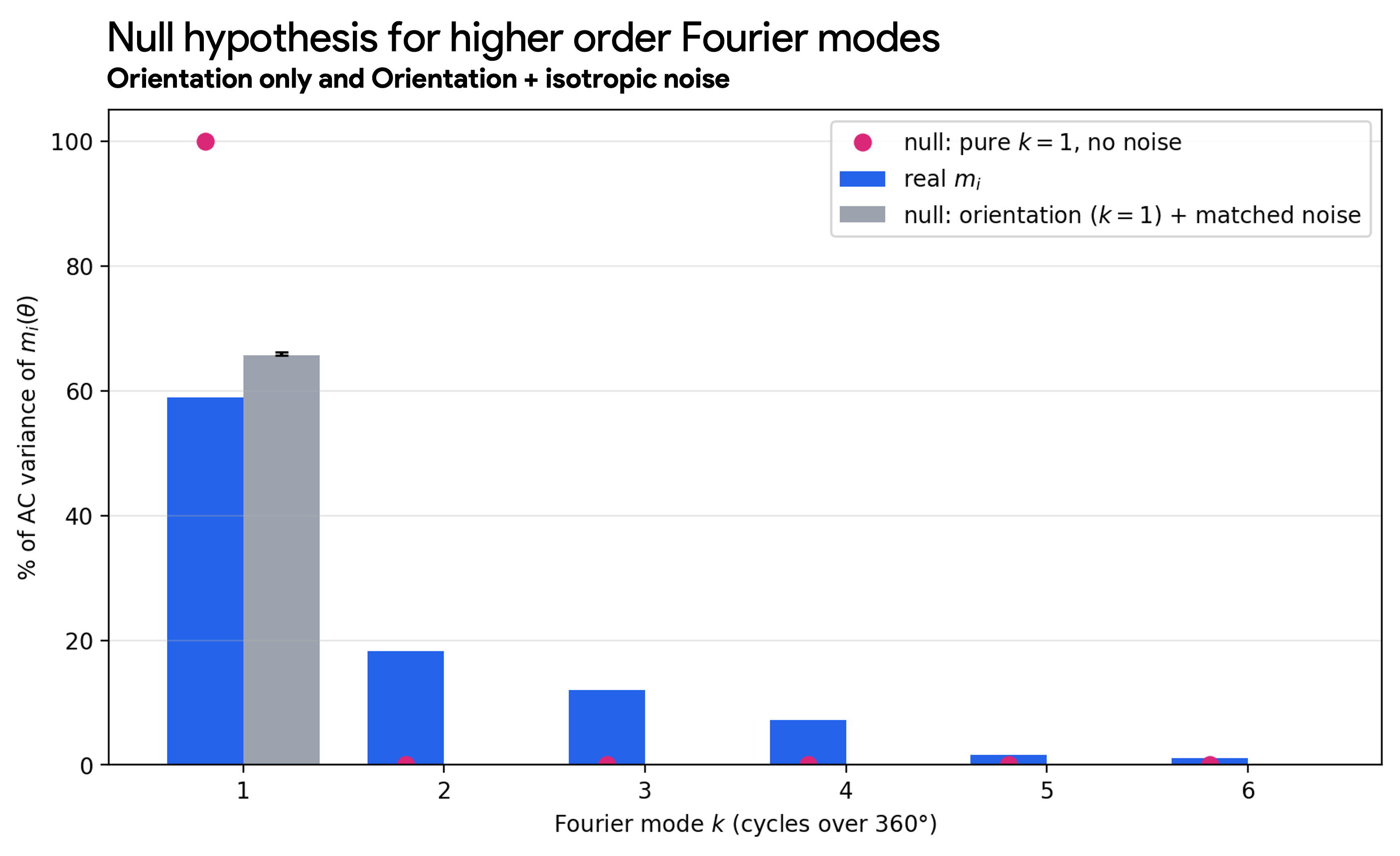}
    \caption{\textbf{The higher harmonics survive a matched null.} Share of the AC
variance of the recovered curve manifold $\bm{m}_i(\theta)$ carried by each
Fourier mode $k$ as the stimulus orientation sweeps $[0,360^\circ)$. The measured
manifold (blue) retains $18.3\%$ and $12.0\%$ of its variance at $k{=}2$ and
$k{=}3$, while the two nulls leave these modes empty: a pure first-harmonic input
(pink, $k{=}1$ only) places all of its power on the orientation circle, and the
same input perturbed by white-in-$\theta$ noise of matched off-circle energy
(grey) sits at the analytic white-noise floor for every $k\!\ge\!2$. The second
and third harmonics are therefore periodic structure inherited from the network
rather than an artifact of noise, the subspace, or the SAE.}
\label{fig:fourier_null}
\end{figure}

\section{Group Sparsity as Matched prior}
\label{app:matched_prior}

This appendix proves Proposition~\ref{lemma:matched_prior} and collects some surrounding remarks. 
We recall our claim here, starting from the data generating process defined in Def. \ref{def:amm}, and our additional assumption introduced in the main text: for each factor
$\mathcal{M}_g$, let $\bm{V}_g=\operatorname{span}(\mathcal{M}_g)$ and assume
$\dim \bm{V}_g=b\ll d$. If
$\bm{D}_g\in\mathbb{R}^{b\times d}$ is an orthonormal basis for $\bm{V}_g$, then every
$\bm{m}_g\in\mathcal{M}_g$ has coordinates $\bm{z}_g\in\mathbb{R}^b$ such that
$\bm{m}_g=\bm{z}_g\bm{D}_g$. Assuming active factors are nonzero
almost surely, let $S=\{g:\|\bm{z}_g\|_2>0\}=\operatorname{supp}_{\mathcal{G}}(\bm{z})$, so that $\bm{z}_g=\bm{0}$ for $g\notin S$. The noisy version of the data generating process then becomes
\[
    \bm{x}
    =
    \sum_{g\in S}\bm{m}_g+\bm{\varepsilon}
    =
    \sum_{g\in S}\bm{z}_g\bm{D}_g+\bm{\varepsilon}
    =
    \bm{z}\bm{D}+\bm{\varepsilon},
    \qquad
    \bm{\varepsilon}\sim\mathcal{N}(\bm{0},\sigma^2\bm{I}_d).
\]
 We can think of each block coordinate $\bm{z}_g$ drawn independently from some prior, such that $\bm{x}$ then arises as the sum of the contributions of (the embeddings back into the $d$-dimensional space of) each coordinate, corrupted by Gaussian noise. Our main result in this section clarifies the nature of the prior, such that our method emerges as the MAP estimate corresponding to this prior: block sparsity supplies the value $\bm{z}$ that maximize a suitable posterior $p(\bm{z}|\bm{x})$.
\begin{proposition}[Block sparsity is the MAP-matched prior]
Let $\bm{x}$ be drawn from the noisy block-linear model
$\bm{x}=\bm{z}\bm{D}+\bm{\varepsilon}$ with
$\bm{\varepsilon}\sim\mathcal{N}(\bm{0},\sigma^2\bm{I}_d)$, such that each block coordinate $\bm{z}_g$ is drawn independently from a block prior
\[
    p(\bm{z}_g)
    =
    (1-\pi)\,\delta_{\bm{0}}
    +
    \pi\,\mathcal{U}_{\mathcal{B}_R},
    \qquad
    \mathcal{B}_R=\{\bm{u}\in\mathbb{R}^b:\|\bm{u}\|_2\le R\},
\] where $\delta_{\bm{0}}$ is the point mass on $\bm{0}$ and $\mathcal{U}_{\mathcal{B}_R}$ is the uniform distribution on the (closed) ball $\mathcal{B}_R$ of radius $R$. Let $\Sigma = \{\bm{z}: \|\bm{z}_g\|_2\le R$ for all $g \leq G\}$ be the support of $p$. Then the MAP estimate is %
\[
    \hat{\bm{z}}
    =
    \argmin_{\bm{z} \in \Sigma}
    \frac{1}{2}\|\bm{x}-\bm{z}\bm{D}\|_2^2
    +
    \lambda\|\bm{z}\|_{2,0},
    \qquad
    \lambda
    =
    \sigma^2
    \log\!\left(
        \frac{1-\pi}{\pi}
        \operatorname{vol}(\mathcal{B}_R)
    \right). \tag{\ref{eq:block_l0}}
\] 
\end{proposition}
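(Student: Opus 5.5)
The plan is to write the posterior as an explicit density with respect to a suitable reference measure, take its negative logarithm, and read off Eq.~\ref{eq:block_l0}. The only real subtlety is that each block prior mixes a point mass with a density. So ``maximum a posteriori'' has to be given a meaning first, and I would fix that meaning at the outset.

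I would take the reference measure $\mu=\bigotimes_{g=1}^G(\delta_{\bm{0}}+\mathrm{Leb}_b)$ on $\mathbb{R}^{bG}$. Equivalently, the space is split into strata indexed by block supports $S\subseteq[G]$: on stratum $S$ the active blocks carry Lebesgue measure and the inactive blocks are pinned at $\bm{0}$. Relative to $\mu$, the prior of one block has density $(1-\pi)$ at $\bm{0}$ and $\pi/\operatorname{vol}(\mathcal{B}_R)$ on $\mathcal{B}_R\setminus\{\bm{0}\}$. The point $\bm{z}_g=\bm{0}$ has Lebesgue measure zero under the slab, so it is harmless to assign it to the spike; this is the ``active factors nonzero a.s.'' convention already adopted above. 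By independence of the blocks, the prior density of $\bm{z}\in\Sigma$ is
\[
p(\bm{z})=(1-\pi)^{G-\|\bm{z}\|_{2,0}}\Big(\tfrac{\pi}{\operatorname{vol}(\mathcal{B}_R)}\Big)^{\|\bm{z}\|_{2,0}},
\]
and it vanishes outside $\Sigma$.

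The Gaussian noise gives the likelihood $p(\bm{x}\mid\bm{z})\propto\exp\!\big(-\|\bm{x}-\bm{z}\bm{D}\|_2^2/2\sigma^2\big)$. By Bayes, the posterior density with respect to $\mu$ is proportional to the product of likelihood and prior, with a normalizer that does not depend on $\bm{z}$. The negative log-posterior on $\Sigma$ is therefore
\[
\tfrac{1}{2\sigma^2}\|\bm{x}-\bm{z}\bm{D}\|_2^2+\|\bm{z}\|_{2,0}\log\!\Big(\tfrac{1-\pi}{\pi}\operatorname{vol}(\mathcal{B}_R)\Big)+\text{const},
\]
where the constant is $-G\log(1-\pi)$ plus the Gaussian normalizer. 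Multiplying by $\sigma^2>0$ and dropping constants leaves the argmin unchanged and yields exactly Eq.~\ref{eq:block_l0} with the stated $\lambda$, with the constraint $\bm{z}\in\Sigma$ coming from the support of the prior.

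To make sure the argmin exists, I would minimize stratum by stratum. For fixed $S$, the objective is continuous on the compact set $\prod_{g\in S}\mathcal{B}_R$, so a minimizer exists. If that minimizer has some block equal to $\bm{0}$, it actually lies in a smaller stratum, whose penalty is no larger when $\lambda\ge0$, so the overall minimum is attained over the finitely many strata. When $\lambda<0$, the statement is purely formal and the same argument goes through with the infimum over each stratum.

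I expect the main obstacle to be conceptual rather than computational. With a spike-and-slab prior, the MAP estimate is only meaningful relative to the chosen reference measure: the penalty $\lambda$ depends on the units of $\operatorname{vol}(\mathcal{B}_R)$, because densities on strata of different dimension are being compared. I would state this convention explicitly. I would also remark that it is the standard one behind the Bernoulli--$\ell_0$ correspondence the paper cites, and that it is what makes the per-block cost of switching a factor on appear as a constant, namely the $\ell_{2,0}$ count.
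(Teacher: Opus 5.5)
Your proposal is correct and follows the paper's proof. Both split $\log p(\bm{z})$ block by block into the spike value $\log(1-\pi)$ and the slab value $\log\pi-\log\operatorname{vol}(\mathcal{B}_R)$, add the Gaussian log-likelihood, drop the $\bm{z}$-independent constants and multiply by $\sigma^2$; your reference measure $\bigotimes_g(\delta_{\bm{0}}+\mathrm{Leb}_b)$ and stratum-wise existence argument make precise what the paper leaves implicit (it silently treats $1-\pi$ and $\pi/\operatorname{vol}(\mathcal{B}_R)$ as comparable densities), but they do not change the route.
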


\begin{proof} We want to maximize $\log p(\bm{z}|\bm{x}) \propto  \log p(\bm{z}) + \log p(\bm{x}|\bm{z})$. Concerning the prior $\log p(\bm{z}) = \sum_{g} \log p(\bm{z}_g)$, we can consider the two cases for each $p(\bm{z}_g)$, depending on whether $\bm{z}_g = \bm{0}$ (i.e., whether $g \notin S$). If $\bm{z}_g = \bm{0}$, then $\log p(\bm{z}_g)$ contributes $-\log(1-\pi)$ to the sum. If $\bm{z}_g \neq \bm{0}$ (and also $\|\bm{z}_g\|_2\le R$), then it contributes $\log \pi - \log \operatorname{vol}(\mathcal{B}_R)$. So this sum becomes \begin{eqnarray*}
    \log p(\bm{z}) & = & \big(G - \|\bm{z}\|_{2,0}\big)\log(1-\pi) + \|\bm{z}\|_{2,0}\big(\log \pi - \log \operatorname{vol}(\mathcal{B}_R)\big) \\
    & = & G \log(1-\pi) + \|\bm{z}\|_{2,0}\log\!\left( \frac{\pi}{(1-\pi)\operatorname{vol}(\mathcal{B}_R)}  \right).
\end{eqnarray*}
The likelihood is Gaussian, so the log likelihood is
\begin{eqnarray*}
    \log p(\bm{x}|\bm{z}) & = & -\frac{d}{2} \log 2\pi\sigma^2 - \frac{1}{2\sigma^2} \|\bm{x}-\bm{z}\bm{D}\|_2^2. 
\end{eqnarray*}
Dropping the two first constant terms, which do not depend on $\bm{z}$, the negative log posterior becomes
\begin{eqnarray*}
    -\log p(\bm{z}|\bm{x}) & = & \frac{1}{2\sigma^2} \|\bm{x}-\bm{z}\bm{D}\|_2^2 - \|\bm{z}\|_{2,0}\log\!\left( \frac{\pi}{(1-\pi)\operatorname{vol}(\mathcal{B}_R)}  \right) \\
    & = & \frac{1}{2\sigma^2} \|\bm{x}-\bm{z}\bm{D}\|_2^2 + \log\!\left( \frac{1-\pi}{\pi}\operatorname{vol}(\mathcal{B}_R)  \right) \|\bm{z}\|_{2,0}.
\end{eqnarray*} Multiplying by $\sigma^2$ does not change the minimum, so Eq. \ref{eq:block_l0} follows immediately.

\end{proof}

Now, few remarks on this result.
First, one could show that the choice of slab is not load-bearing. A Gaussian slab gives the same selection term plus a ridge penalty on active blocks~\citep{mitchell1988bayesian}, and a Laplace slab on the block norm gives the group lasso penalty $\|\bm{z}\|_{2,1}$ as exact MAP~\citep{casella2010penalized}. The flat slab is simply the cleanest expression of the principle, and the family of penalties it generates is studied in the structured sparsity literature~\citep{yuan2006model,bach2012structured}.
Second, the threshold $\lambda$ grows with the block dimension. The volume of the ball scales as $R^b$, sohigher thresholds are more costly: to put it otherwise, a feature is granted $b$ dimensions only if it earns them in reconstruction.
Third, the $2$-norm inside the blocks is forced and can be interpreted as the signature of activity. As a manifold has no canonical chart: rotating the chart, $\bm{z}_g \mapsto \bm{z}_g\bm{Q}^\top$ and $\bm{D}_g \mapsto \bm{Q}\bm{D}_g$ for $\bm{Q} \in O(b)$, leaves the model unchanged, so a penalty intrinsic to the model must be blind to this rotation. The orbits of $O(b)$ acting on $\mathbb{R}^b$ are the spheres of constant $\ell_2$ norm, so a separable penalty is rotation-invariant exactly when it depends on each block through $\|\bm{z}_g\|_2$. Both $\|\cdot\|_{2,0}$ and $\|\cdot\|_{2,1}$ qualify. 

It is also worth spelling out what happens when the prior is mismatched, since it retrodicts a known phenomenon. Under the model, an active factor occupies $k$ coordinates, so a flat $\ell_0$ budget charges $b$ per faithfully represented factor. A coder can instead ``pay'' by approximating a local neighborhood of the manifold with a single direction, and as the budget tightens, the optimal flat-sparse solution covers each factor with many rank-one atoms, one per region. The tiling of curve detectors in InceptionV1~\citep{cammarata2020curve,gorton2024missing} and the dilution analysis of~\citet{bhalla2026sparse,lubana2025priors} are, under this reading, the predictable optimum of a mismatched prior.

Finally, Eq.~\ref{eq:block_l0} is combinatorial and NP-hard in general~\citep{eldar2009block}, and the structured sparsity literature offers various method to make the optimization tractable. For example, one can relax the penalty to its tightest convex surrogate, the group lasso norm~\citep{yuan2006model}; one can select blocks greedily by residual correlation, as in Block-OMP~\citep{eldar2009block}; or one can keep the sparsity as a constraint and project, retaining the top blocks by energy, in the spirit of iterative hard thresholding.

\section{Downstream performances}

\subsection{Quantization and task ceiling (Figure~\ref{fig:pareto})}
In this experiment, we measure the effect of reconstruction error on task performance. We uniformly quantize the normalized patch activations per patch, $\bm{x_q} = \Delta\,\mathrm{round}(\bm{x}/\Delta)$ and sweep $\Delta$ over 13 levels from $0$ to $7$. Then, we feed the quantized features to a linear probe trained once on the clean activations and held frozen (`Basic Probe') and record the task metric relative to its unquantized value. The corruption is measured by the centered reconstruction fidelity $R^2(\bm{x},\bm{x_{q}})=1-\lVert \bm{x}-\bm{x_q}\rVert^2/\lVert \bm{x}-\bar{\bm{x}}\rVert^2$ over the same tokens. Probes report ImageNet-1k top-1, ADE20k mIoU, and NYUv2 $\delta_1$ (the fraction of pixels whose predicted depth is within $25\%$ of ground truth).

\subsection{Single-concept F1 (Figure~\ref{fig:f1_tv_dirichlet}, left)}
For each ImageNet-1k class we find the single concept that best detects it. Per patch a block emits a $k$-vector of code values, which we read out along a unit direction $\bm{v}$ in the block's $k$-dimensional subspace (for the vanilla $k{=}1$ SAE this is just the signed code). The image-level score of a (block, $\bm{v}$) pair is the maximum of this projection over the 196 patches. For each (class, block) we fit $\bm{v}$ by $\ell_2$-regularized logistic regression on the block's firing patches labeled by image class, set the threshold that maximizes F1, and keep the block with the highest F1. Selection uses a 200k-image train subsample.

\subsection{Concept-map smoothness (Figure~\ref{fig:f1_tv_dirichlet}, center and right)}
\label{app:tv}
For each image and concept we form the $14\times14$ map of its per-patch activation (block: group $\ell_2$ norm; vanilla: $|\text{code}|$) and keep the 8 highest-energy concepts per image, unit-$\ell_2$ normalizing each map. We report total variation (sum of absolute 4-neighbor differences) and Dirichlet energy (sum of squared 4-neighbor differences), averaged over the 8 maps.

\subsection{Linear probing and cosine probe recovery (Figures~\ref{fig:downstream_tasks} and~\ref{fig:probe_recovery})}
With the backbone and featurizer frozen, we fit linear probes on the extracted codes for ImageNet-1k classification (max-abs pooled to image level), ADE20k segmentation, and NYUv2 depth (both per patch). We sweep over an SGD/AdamW learning-rate and weight-decay grid, picking the best performing validation probe for each condition. The basic probe baseline is the same probe fit directly on the raw activations, and we optionally concatenate the CLS token for classification and depth. For cosine probe recovery, we take each trained raw-activation probe direction and measure the cosine between it and its projection onto the span of its best-aligned block's $k$ decoder atoms. We report the median and IQR over probe directions against a random-direction floor and a real-activation ceiling.

\paragraph{Additional Results}

We train BSFs of varying block dimension $k$ on ImageNet-1k activations, freeze both the backbone and the featurizer, and fit linear probes on the extracted codes for three tasks, classification on ImageNet-1k, semantic segmentation on ADE20k, and monocular depth on NYUv2, comparing against a vanilla SAE (the $k{=}1$ BSF) and against a probe read directly off the raw DINOv3 activations (the basic probe). The codes improve on the basic probe for classification and approach it without surpassing it for segmentation and depth, and performance rises with the block sparsity $\ell$ on all three tasks (Figure~\ref{fig:downstream_tasks}).

\begin{figure}[h]
    \centering
    \includegraphics[width=0.95\linewidth]{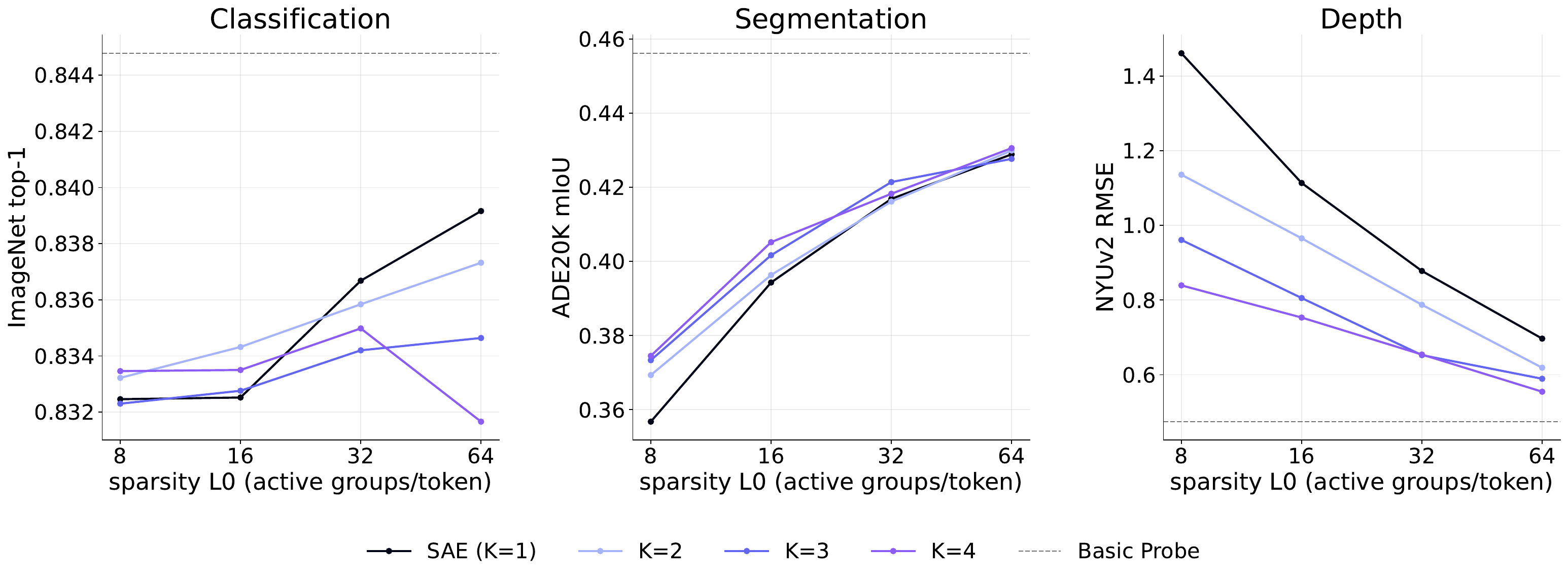}
    \caption{\textbf{Linear Probe Evaluation}. We train linear probes on codes extracted by Vanilla SAEs and BSFs (Block-SAEs) on ImageNet-1k (classification), ADE20k (semantic segmentation) and NYUv2 (monocular depth estimation). We report validation set performance on each dataset.}
    \label{fig:downstream_tasks}
\end{figure}

\begin{figure}[h]
    \centering
    \includegraphics[width=0.95\linewidth]{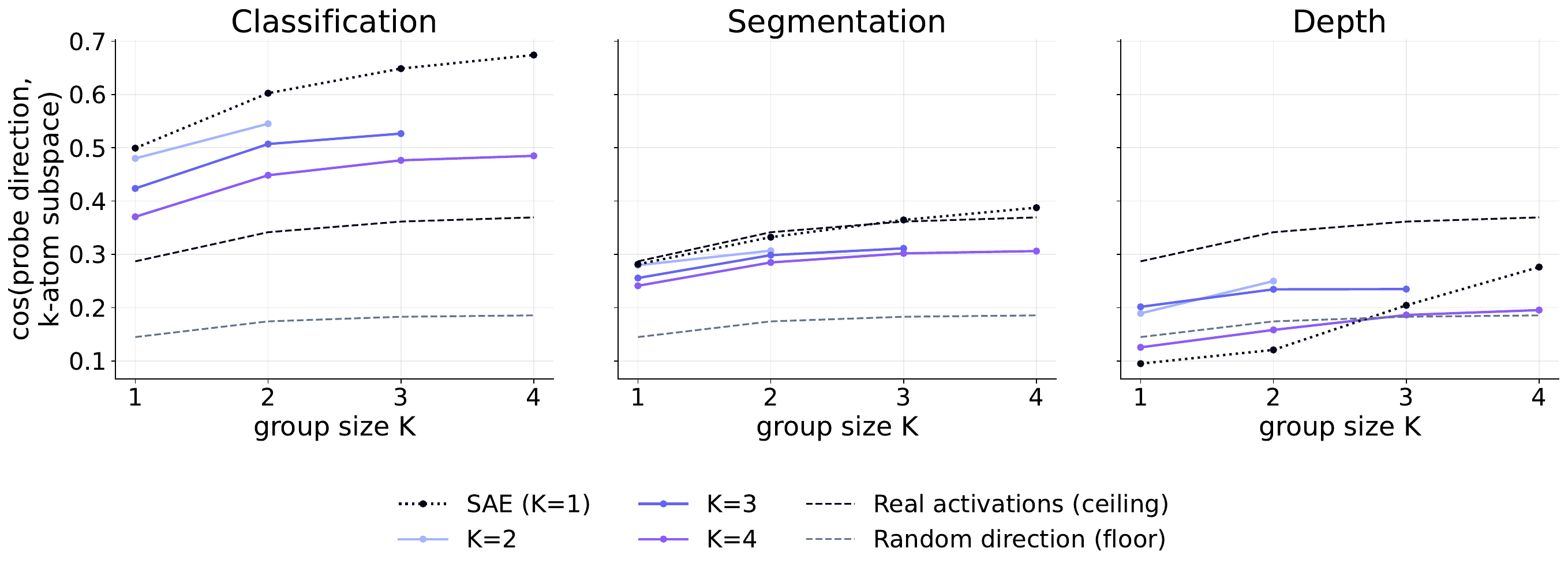}
    \caption{\textbf{Probe Recovery}. We treat linear probes as activations and measure how well different SAEs can reconstruct them.}
    \label{fig:probe_recovery}
\end{figure}

\begin{figure}[h]
    \centering
    \includegraphics[width=0.95\linewidth]{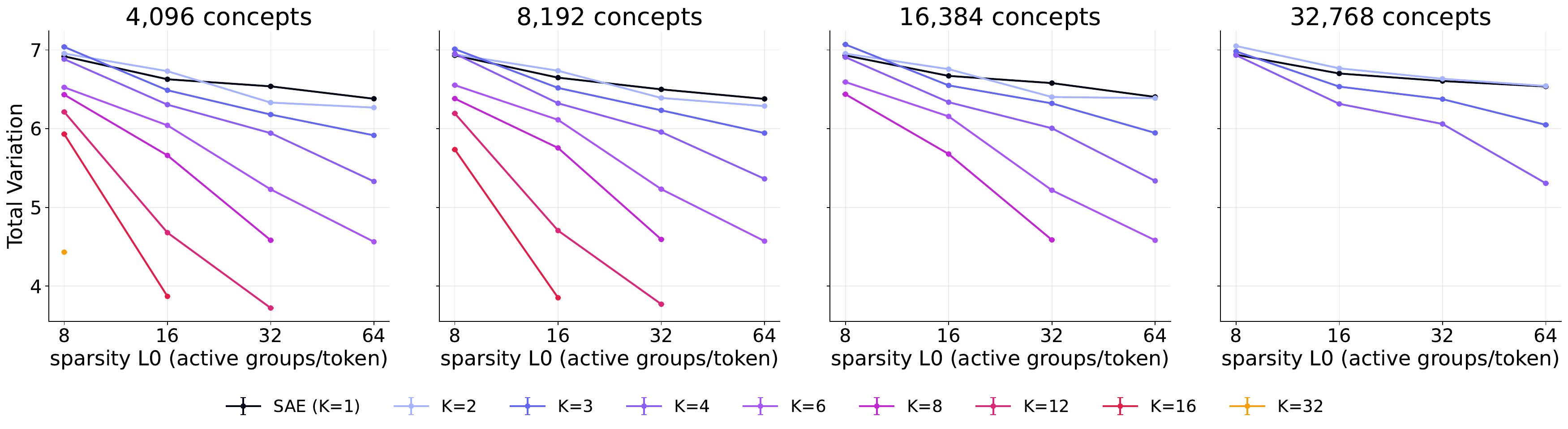}
    \caption{\textbf{BSFs learn more spatially coherent concepts}. We show total variation as a function of number of concepts predicted for a sweep of 96 SAEs.}
    \label{fig:smoothness_sweep}
\end{figure}

\section{Toy model of Manifold Superposition}
\label{app:toma}

The toy model is simple: each observation is a sparse sum of low-dimensional manifolds embedded into $\mathbb{R}^d$ by random orthonormal maps, $\bm{x} = \sum_{i \in S} \tilde{\bm{\gamma}}_i(\theta_i)\,\bm{V}_i + \bm{\varepsilon}$ with $|S| = L_0$, so that the question we put to a featurizer is whether it recovers the individual factors from their mixture.

\paragraph{Manifold zoo.}
Table~\ref{tab:toma_zoo} collects the manifold types we use, giving for each its intrinsic dimension $d_i$, the dimension $\blocksize_i$ of the ambient subspace its embedding occupies, and the parametric map $\bm{\gamma}_i$. The gap between $d_i$ and $\blocksize_i$ is the distinction that governs the construction throughout, since a circle is parameterized by a single angle, $d_i{=}1$, yet its embedding $(\cos\theta, \sin\theta)$ spans a two-dimensional subspace, $\blocksize_i{=}2$, so that a block must hold two directions to carry it. We instantiate $M = 128$ factors at ambient dimension $d = 128$, half of them one-dimensional concept atoms in the classical dictionary-learning sense and half curved manifolds drawn in turn from the seven curved types.

\begin{table}[h]
\centering
\footnotesize
\caption{Manifold zoo used in the controlled toy.}
\label{tab:toma_zoo}
\setlength{\tabcolsep}{3pt}
\begin{tabular}{lccl}
\toprule
\textbf{Type} & $d_i$ & $\blocksize_i$ & \textbf{Embedding} $\bm{\gamma}_i(\theta) \in \mathbb{R}^{\blocksize_i}$ \\
\midrule
Concept (segment) & 1 & 1 & $(t)$ \\
Circle    & 1 & 2 & $(\cos\theta,\; \sin\theta)$ \\
Flat disk & 2 & 2 & $(r\cos\theta,\; r\sin\theta),\quad r \sim \sqrt{\mathcal{U}(0,1)}$ \\
Sphere    & 2 & 3 & $(\sin\phi\cos\theta,\; \sin\phi\sin\theta,\; \cos\phi)$ \\
Torus     & 2 & 3 & $((R{+}r\cos\phi)\cos\theta,\; (R{+}r\cos\phi)\sin\theta,\; r\sin\phi)$ \\
M\"obius  & 2 & 3 & $((1{+}t\cos\tfrac{\phi}{2})\cos\phi,\; (1{+}t\cos\tfrac{\phi}{2})\sin\phi,\; t\sin\tfrac{\phi}{2})$ \\
Swiss roll& 2 & 3 & $(\theta\cos\theta,\; h,\; \theta\sin\theta)$ \\
Helix     & 1 & 3 & $(\cos\theta,\; \sin\theta,\; \alpha\theta)$ \\
\bottomrule
\end{tabular}
\end{table}

\paragraph{Normalization.}
So that every factor weighs equally in the reconstruction loss, we center and isotropically rescale each instance at construction, since without this correction the large-coordinate manifolds, such as the Swiss roll whose coordinates run to $\theta \sim 4.5\pi$, would consume the featurizer's capacity while the small ones would be read as noise. For each instance $i$ we draw a calibration sample of $50{,}000$ points from the raw embedding $\bm{\gamma}_i$, compute the mean $\bm{\mu}_i$ and the RMS norm of the centered samples $\sigma_i = \sqrt{\mathbb{E}\|\bm{\gamma}_i(\theta) - \bm{\mu}_i\|^2}$, and set $\tilde{\bm{\gamma}}_i(\theta) = (\bm{\gamma}_i(\theta) - \bm{\mu}_i)/\sigma_i$. Being a translation composed with an isotropic rescaling, this leaves angles, relative distances, curvature ratios, and topology untouched and fixes the RMS norm of every instance to one in local coordinates regardless of type.

For each instance we draw a random orthonormal matrix $\bm{V}_i \in \mathbb{R}^{\blocksize_i \times d}$ as the transposed $\bm{Q}$ factor of the QR decomposition of a $d \times \blocksize_i$ Gaussian, so that the embedding preserves norm, $\|\bm{z}\bm{V}_i\|_2 = \|\bm{z}\|_2$, with bias offsets set to zero throughout.

\paragraph{Sparse mixture sampling.}
Observations follow
\begin{equation*}
    \bm{x} = \sum_{i \in S} \tilde{\bm{\gamma}}_i(\theta_i)\,\bm{V}_i + \bm{\varepsilon}, \qquad |S| = L_0 = 4,
\end{equation*}
where the active set $S$ is drawn uniformly without replacement from the $M$ instances and the intrinsic coordinates $\theta_i$ uniformly on each manifold, and we take clean activations, $\bm{\varepsilon} = \bm{0}$, so that recovery reflects geometric organization rather than denoising. We generate $N = 3{\times}10^{5}$ training samples and a held-out evaluation set of $10^{5}$ samples at $L_0{=}4$ under a separate seed, retaining the per-manifold contributions $\bm{m}_i = \tilde{\bm{\gamma}}_i(\theta_i)\bm{V}_i$ and the active masks, so that evaluation measures recovery on in-distribution superpositions whose ground truth is known. Activations are scaled so that $\mathbb{E}\|\bm{x}\|_2^2 = 1$.

\paragraph{Featurizer training.}
We train the three BSFs of Eq.~\ref{eq:bsf} together with the baselines, a TopK SAE that is the $\blocksize{=}1$ Vanilla BSF and the two concurrent subspace featurizers discussed below, using the single-forward-pass encoders of Appendix~\ref{app:implementations} and Adam over a few hundred epochs. Each featurizer is reported at its best configuration over a sweep of block sparsity $\topk$, dictionary width $G$, and block dimension $\blocksize$, the Vanilla BSF for instance at $\blocksize{=}4$, $\topk{=}4$, $G{=}2M$, and no reconstruction noise or auxiliary loss enters unless the method itself prescribes it.

\paragraph{Per-block recovery ($R^2$).}
The metric tests Def.~\ref{def:amm} directly, asking for each ground-truth factor how well a single recovered block reconstructs that factor's contribution. We encode the evaluation set to codes $\{\bm{z}^{(j)}\}$ and, for each instance $i$, restrict to the rows on which $i$ is active using the ground-truth masks, which gives codes $\bm{Z}_i$ and true contributions $\bm{M}_i$; we then match instance $i$ to the block $g^\star(i)$ whose firing best predicts its active mask and reconstruct the contribution from that block alone, $\hat{\bm{M}}_i = \bm{Z}_i^{(g^\star)} \bm{D}$, where $\bm{Z}_i^{(g^\star)}$ zeroes all blocks but $g^\star(i)$. The per-block $R^2$ is
\begin{equation*}
    R^2(i) = 1 - \frac{\sum_j \|\bm{m}_i^{(j)} - \hat{\bm{m}}_i^{(j)}\|^2}{\sum_j \|\bm{m}_i^{(j)} - \bar{\bm{m}}_i\|^2},
\end{equation*}
with $\bar{\bm{m}}_i$ the mean true contribution, averaged over factors that are active sufficiently often. For the directional SAE the matched block is taken to be its $\blocksize_i$ best-fitting atoms, the restricted-$R^2$ subspace-capture reading, so that each featurizer is granted exactly the ambient dimension its factor requires. An oracle that reconstructs each $\bm{m}_i$ from the true active subspaces by least squares reaches $R^2 \approx 0.99$, which we report as the recovery ceiling, and Figure~\ref{fig:toy_leaderboard} reports the mean per-block $R^2$.

\begin{figure}[ht]
    \centering
    \includegraphics[width=\linewidth]{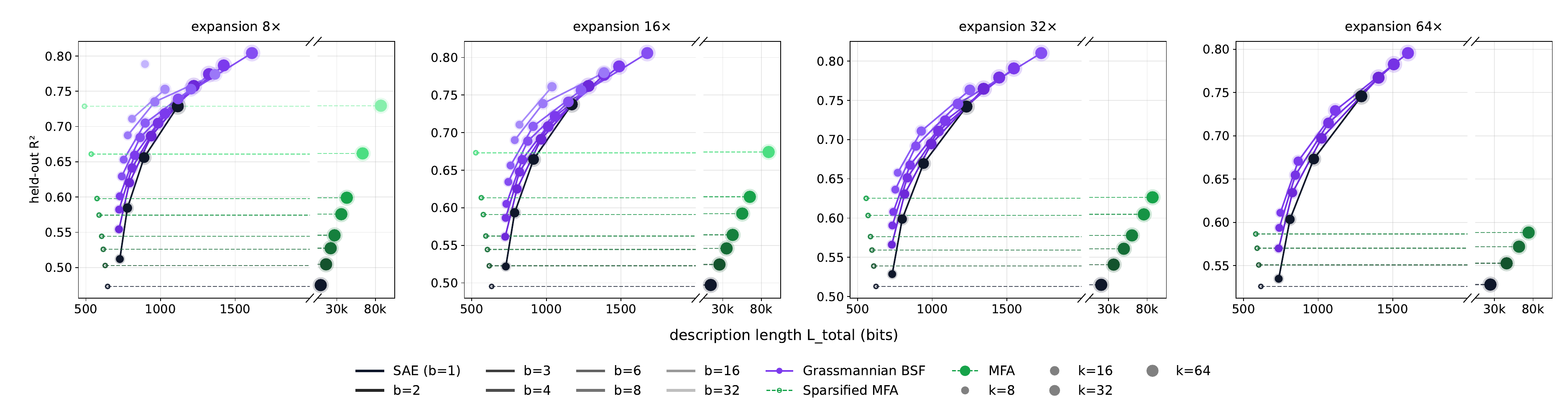}
    \caption{\textbf{Minimal description length comparison between Grassmannian BSFs and MFAs}. MFAs are naturally dense, resulting in high description lengths during inference. Hard top-k thresholding can be applied to MFAs to induce sparsity, as shown by the open circles (sparsity of k=1), significantly reducing the MDL without impacting $R^2$ significantly. }
    \label{fig:mfa_mdl}
\end{figure}

\paragraph{Adapting SMixAE and MFA.}
SMixAE encodes through a rectified expert space and a per-expert low-dimensional bottleneck, mapping the input with a LearkyReLU after an affine projection and reshaping the result into $K$ experts, then it projecting each to a $\blocksize$-dimensional bottleneck, select experts by a batch-level Top-$\topk$ on the bottleneck norm, and decoding through $\bm{W}^{\mathrm{lat}}_e$ and a shared $\bm{W}_{\mathrm{dec}}$. As published it reaches per-block $R^2 \approx 0.71$ on the toy, and two changes that align it with the block-sparse prior lift it to $R^2 \approx 0.85$: removing the LeakyRelu so that the code is signed and replacing the batch routing with the per-sample block projection $\bm{\Pi}_{\topk}$ on the bottleneck norm, which is the Vanilla-BSF selection. 

For MFA~\citep{shafran2026directions}, it fails for a different reason, its native convex combination (the responsability coefficients) allow MFA to reach only per-block $R^2 \approx 0.38$ and a global reconstruction that saturates near $0.47$, because a mixture assigns each activation to one component and so cannot account for an additive sum, as the next paragraph makes precise. 
Re-using its learned per-component subspaces $\{\bm{W}_c\}$ as a block dictionary and decoding additively, by selecting the $\topk$ best-fitting subspaces per activation and solving a single joint least-squares over their union, optionally followed by a short block-Top-$\topk$ fine-tune, raises recovery to $R^2 \approx 0.88$. This additive decode abandons the model's own one-component generative assumption and is therefore a decode-time heuristic rather than a coherent featurizer, included only to establish that the learned subspaces are themselves usable once they are read additively.

\paragraph{The implicit data-generating process of MFA.}
It is worth making explicit why MFA is mismatched to Def.~\ref{def:amm}, as a mixture of factor generates an activation by drawing a single component and then a Gaussian within it,
\begin{equation*}
c \sim \mathrm{Cat}(\bm{\pi}), \qquad \bm{z} \sim \mathcal{N}(\bm{0}, \bm{I}_q), \qquad \bm{x} = \bm{\mu}_c + \bm{W}_c \bm{z} + \bm{\varepsilon}, \qquad \bm{\varepsilon} \sim \mathcal{N}(\bm{0}, \bm{\Psi}_c),
\end{equation*}
with $\bm{W}_c \in \mathbb{R}^{q \times d}$ a low-rank loading and $\bm{\Psi}_c$ a diagonal noise, so that the marginal of component $c$ is $\mathcal{N}(\bm{\mu}_c, \bm{W}_c^\top \bm{W}_c + \bm{\Psi}_c)$. Read in the language of Sec.~\ref{subsec:bsf}, this is a block model with exactly one active block: the categorical $c$ selects a single subspace $\mathcal{V}_c = \operatorname{span}(\bm{W}_c)$, the code $\bm{z}$ places the activation within it, and the diagonal $\bm{\Psi}_c$ absorbs the off-subspace residual. It is therefore the one-sparse, $|S|{=}1$ special case of Eq.~\ref{eq:linearized_dgp}, with the spike-and-slab prior collapsed to the statement that exactly one factor is on.

The mismatch is now visible in a single number. Def.~\ref{def:amm} admits $|S| > 1$, so an activation is a sum of several factors, whereas the mixture admits only $|S|{=}1$ and so represents a choice among factors. This is the difference between co-presence and selection: when the fitted mixture is uncertain, its responsibilities spread over the components as a convex weighting on the simplex, but that weighting still encodes uncertainty about which one factor produced the activation, and a convex combination of single factors can never be their Minkowski sum. A genuinely additive probabilistic treatment of the factor-analyzer family, one whose generative process places $|S| > 1$ factors on at once, is a natural and promising direction that lies outside the scope of this work.

\end{document}